\newtheorem{theorem}{Theorem}
\newtheorem{lemma}[theorem]{Lemma}
\newtheorem{definition}[theorem]{Definition}
\newcommand{\oea}{\mbox{$(1 + 1)$~EA}\xspace}
\newcommand{\OM}{\textsc{OM}\xspace}
\newcommand{\onemax}{\textsc{OneMax}\xspace}
\newcommand{\LO}{\textsc{Leading\-Ones}\xspace}
\newcommand{\leadingones}{\LO}
\DeclareMathOperator{\jump}{\textsc{Jump}}
\DeclareMathOperator{\rand}{rand}
\newcommand{\R}{\ensuremath{\mathbb{R}}}
\newcommand{\N}{\ensuremath{\mathbb{N}}} 
\DeclareMathOperator{\Bin}{Bin}
\newcommand{\eps}{\varepsilon} 
\newcommand{\merk}[1]{\textbf{\textcolor{red}{#1}}}
\newcommand{\ignore}[1]{}
\newcommand{\assign}{\leftarrow}
\begin{document}

\title{\vspace*{-1cm}Lower Bounds from Fitness Levels Made Easy\thanks{This work was supported by a public grant as part of the
Investissements d'avenir project, reference ANR-11-LABX-0056-LMH,
LabEx LMH, and by the Deutsche Forschungsgemeinschaft (DFG), grant FR 2988/17-1.
}
\thanks{Extended version of the conference paper~\cite{DoerrK21gecco} accepted for publication in the proceedings of \emph{GECCO 2021}.}}


\author{Benjamin Doerr\setcounter{footnote}{6}\thanks{Laboratoire d'Informatique (LIX), CNRS, \'Ecole Polytechnique, Institut Polytechnique de Paris, Palaiseau, France} \and Timo K{\"o}tzing\thanks{Hasso Plattner Institute, Potsdam, Germany}}

\maketitle

\begin{abstract}
One of the first and easy to use techniques for proving run time bounds for evolutionary algorithms is the so-called method of fitness levels by Wegener. It uses a partition of the search space into a sequence of levels which are traversed by the algorithm in increasing order, possibly skipping levels. An easy, but often strong upper bound for the run time can then be derived by adding the reciprocals of the probabilities to leave the levels (or upper bounds for these). Unfortunately, a similarly effective method for proving lower bounds has not yet been established. The strongest such method, proposed by Sudholt (2013), requires a careful choice of the viscosity parameters $\gamma_{i,j}$, $0 \le i < j \le n$. 

In this paper we present two new variants of the method, one for upper and one for lower bounds. Besides the level leaving probabilities, they only rely on the probabilities that levels are visited at all. We show that these can be computed or estimated without greater difficulties and apply our method to reprove the following known results in an easy and natural way. (i) The precise run time of the (1+1) EA on \textsc{LeadingOnes}. (ii) A lower bound for the run time of the (1+1) EA  on \textsc{OneMax}, tight apart from an $O(n)$ term. (iii) A lower bound for the run time of the (1+1) EA on long $k$-paths. We also prove a tighter lower bound for the run time of the (1+1) EA on jump functions by showing that, regardless of the jump size, only with probability $O(2^{-n})$ the algorithm can avoid to jump over the valley of low fitness.
%
%
%
\end{abstract}
\ignore{
One of the first and easy to use techniques for proving run time bounds for evolutionary algorithms is the so-called method of fitness levels by Wegener. It uses a partition of the search space into a sequence of levels which are traversed by the algorithm in increasing order, possibly skipping levels. An easy, but often strong upper bound for the run time can then be derived by adding the reciprocals of the probabilities to leave the levels (or upper bounds for these). Unfortunately, a similarly effective method for proving lower bounds has not yet been established. The strongest such method, proposed by Sudholt (2013), requires a careful choice of the viscosity parameters $\gamma_{i,j}$, $0 \le i < j \le n$. 

In this paper we present two new variants of the method, one for upper and one for lower bounds. Besides the level leaving probabilities, they only rely on the probabilities that levels are visited at all. We show that these can be computed or estimated without greater difficulties and apply our method to reprove, in an easy and natural way, known results for the expected run time of the (1+1) EA on (i) LeadingOnes, (ii) OneMax and (iii) long paths.


One of the first and easy to use techniques for proving run time bounds for evolutionary algorithms is the so-called method of fitness levels by Wegener. It uses a partition of the search space into a sequence of levels which are traversed by the algorithm in increasing order, possibly skipping levels. An easy, but often strong upper bound for the run time can then be derived by adding the reciprocals of the probabilities to leave the levels (or upper bounds for these). Unfortunately, a similarly effective method for proving lower bounds has not yet been established. The strongest such method, proposed by Sudholt (2013), requires a careful choice of the viscosity parameters gamma. 

In this paper we present two new variants of the method, one for upper and one for lower bounds. Besides the level leaving probabilities, they only rely on the probabilities that levels are visited at all. We show that these can be computed or estimated without greater difficulties and apply our method to reprove the following known results in an easy and natural way. (i) The precise run time of the (1+1) EA on LeadingOnes. (ii) A lower bound for the run time of the (1+1) EA on OneMax, tight apart from an O(n) term. (iii) A lower bound for the run time of the (1+1) EA on long k-paths. 
}

\sloppy{
\section{Introduction}

The theory of evolutionary computation aims at explaining the behavior of evolutionary algorithms, for example by giving detailed run time analyses of such algorithms on certain test functions, defined on some search space (for this paper we will focus on $\{0,1\}^n$). The first general method for conducting such analyzes is the \emph{fitness level method (FLM)} \cite{Wegener01,Wegener02}. The idea of this method is as follows. 
 We partition the search space into a number $m$ of sections (``levels'') in a linear fashion, so that all elements of later levels have better fitness than all elements of earlier levels. For the algorithm to be analyzed we regard the best-so-far individual and the level it is in. Since the best-so-far individual can never move to lower levels, it will visit each level at most once (possibly staying there for some time). Suppose we can show that, for any level $i < m$ which the algorithm is currently in, the probability to leave this level is at least $p_i$. Then, bounding the expected waiting for leaving a level $i$ by $1/p_i$, we can derive an upper bound for the run time of $\sum_{i=1}^{m-1} 1/p_i$ by pessimistically assuming that we visit (and thus have to leave) each level $i < m$ before reaching the target level $m$. The fitness level method allows for simple and intuitive proofs and has therefore frequently been applied. Variations of it come with tail bounds~\cite{Witt14}, work for parallel EAs~\cite{LassigS14}, or admit non-elitist EAs~\cite{Lehre11,DangL16algo,CorusDEL18,DoerrK19}.

While very effective for proving upper bounds, it seems much harder to use fitness level arguments to prove lower bounds (see Theorem~\ref{thm:FLMlow} for an early attempt). The first (and so far only) to devise a fitness level-based lower bound method that gives competitive bounds was Sudholt \cite{Sudholt13}. His approach uses viscosity parameters $\gamma_{i,j}$, $0 \le i < j \le n$, which control the probability of the algorithm to jump from one level $i$ to a higher level $j$ (see Section~\ref{sec:flmWithViscosities} for details). While this allows for deriving strong results, the application is rather technical due to the many parameters and the restrictions they have to fulfill.

In this paper, we propose a new variant of the FLM for lower bounds, which is easier to use and which appears more intuitive. For each level $i$, we regard the \emph{visit probability} $v_i$, that is, the probability that level $i$ is visited at all during a run of the algorithm. This way we can directly characterize the run time of the algorithm as $\sum_{i=1}^{m-1} v_i/p_i$ when $p_i$ is the precise probability to leave level $i$ independent of where on level $i$ the algorithm is. When only estimates for these quantities are known, e.g., because the level leaving probability is not independent from the current state, then we obtain the corresponding upper or lower bounds on the expected run time (see Section~\ref{sec:flmWithVisitProbabilities} for details).

We first use this method to give the precise expected run time of the \oea on \leadingones in Section~\ref{sec:leadingones}. While this run time was already well-understood before, it serves as a simple demonstration of the ease with which our method can be applied.

Next, in Section~\ref{sec:onemax}, we give a bound on the expected run time of the \oea on \onemax, precise apart from terms of order $\Theta(n)$. Such bounds have also been known before, but needed much deeper methods (see Section~\ref{ssec:onemaxlit} for a detailed discussion). Sudholt's lower bound method has also been applied to this problem, but gave a slightly weaker bound deviating from the truth by an $O(n \log\log n)$ term. In addition to the precise result, we feel that our FLM with visit probabilities gives a clearer structure of the proof than the previous works.

In Section~\ref{sec:jump}, we prove tighter lower bounds for the run time of the \oea on jump functions. We do so by determining (asymptotically precise) the probability that in a run of the \oea on a jump function the algorithm does not reach a non-optimal search point outside the fitness valley (and thus does not have to jump over this valley). Interestingly, this probability is only $O(2^{-n})$ regardless of the jump size (width of the valley).

Finally, in Section~\ref{sec:longKPaths}, we consider the \oea on so-called long $k$-paths. We show how the FLM with visit probabilities can give results comparable to those of the FLM with viscosities while again being much simpler to apply.

\section{The {\oea}}
\label{sec:algo}

In this paper we consider exactly one randomized search heuristic, the \oea. It maintains a single individual, the best it has seen so far. Each iteration it uses standard bit mutation with mutation rate $p \in (0,1)$ (flipping each bit of the bit string independently with probability $p$) and keeps the result if and only if it is at least as good as the current individual under a given fitness function $f$. We give a more formal definition in Algorithm~\ref{alg:oea}.
\begin{algorithm2e}
	Let $x$ be a uniformly random bit string from $\{0,1\}^n$\;
	\While{optimum not reached}{%
		$y \assign \mbox{mutate}_p(x)$\;
		\lIf{$f(y) \geq f(x)$}{$x \assign y$}
	}
	\caption{The \oea to maximize $f : \{0,1\}^n \to \R$.}
	\label{alg:oea}
\end{algorithm2e}

\section{The Fitness Level Methods}

The fitness level method is typically phrased in terms of a \emph{fitness-based partition}, that is, a partition of the search space into sets $A_1,\ldots,A_m$ such that elements of later sets have higher fitness. We first introduce this concept and abstract away from it to ease the notation. After this, in Section~\ref{sec:originalFLM}, we state the original FLM. In Section~\ref{sec:flmWithViscosities} we describe the lower bound based on the FLM from Sudholt \cite{Sudholt13}, before presenting our own variant, the FLM with visit probabilities, in Section~\ref{sec:flmWithVisitProbabilities}.

\subsection{Level Processes}

\begin{definition}[Fitness-Based Partition {\cite{Wegener02}}]
Let $f: \{0,1\}^n \rightarrow \R$ be a fitness function. A partition $A_1,\ldots,A_m$ of $\{0,1\}^n$ is called a \emph{fitness-based partition} if for all $i,j \leq m$ with $i < j$ and $x \in A_i$, $y \in A_j$, we have $f(x) < f(y)$.
\end{definition}

We will use the shorthands $A_{\geq i} = \bigcup_{j=i}^m A_j$ and $A_{\leq i} = \bigcup_{j=1}^i A_j$.

In order to simplify our notation, we focus on processes on $[1..m]$ (the levels) with underlying Markov chain as follows.
\begin{definition}[Non-decreasing Level Process]\label{def:levelProcess}
A stochastic process $(X_t)_t$ on $[1..m]$ is called a \emph{non-decreasing level process} if and only if (i)~there exists a Markov process $(Y_t)_t$ over a state space $S$ such that there is an $\ell: S \rightarrow [1..m]$ with $\ell(Y_t) = X_t$ for all $t$, and (ii)~the process $(X_t)_t$ is non-decreasing, that is, we have $X_{t+1} \ge X_t$ with probability one for all $t$. 
\end{definition}
We later want to analyze algorithms in terms of non-decreasing level processes, making the transition as follows. Suppose we have an algorithm with state space $\{0,1\}^n$. Denoting by $Y_t$
the best among the first $t$ search points generated by the algorithm, this defines a Markov Chain $(Y_t)_t$ in the state space $S = \{0,1\}^n$, the run of the algorithm. Further, suppose the algorithm optimizes a fitness function $f$ such that the state of the algorithm is non-decreasing in terms of fitness. In order to get a non-decreasing level process, we can now define any fitness-based partition and get a corresponding \emph{level function} $\ell: S \rightarrow [1..m]$ by mapping any $x \in S$ to the unique $i$ with $x \in A_i$. Then the process $(\ell(Y_t))_t$ is a non-decreasing level process.

The main reason for us to use the formal notion of a level process is the property formalized in the following lemma. Essentially, if a level process makes progress with probability at least $p$ in each iteration (regardless of the precise current state), then the expected number of iterations until the process progresses is at most $1/p$. This situation resembles a geometric distribution, but does not assume independence of the different iterations (one could show that the time to progress is stochastically dominated by a geometric distribution with success rate $p$, but we do not need this level of detail).

\begin{lemma}\label{lem:geometricDistribution}
Let $(X_t)_t$ be a non-decreasing level process with underlying Markov chain $(Y_t)_t$ and level function $\ell$. Assume $X_t$ starts on some particular level. Let $p \in (0,1]$ be a lower bound on the probability for level process to leave this level regardless of the state of the underlying Markov chain. Then the expected first time $t$ such that $X_t$ changes is at most $1/p$.

Analogously, if $p$ is an upper bound, the expected time $t$ such that $X_t$ changes is at least $1/p$.
\end{lemma}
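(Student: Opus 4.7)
The plan is to use the tail-sum formula for nonnegative integer random variables and an inductive application of the Markov property. Let $i_0 = X_0$ be the starting level and let $\tau = \inf\{t \ge 0 : X_t \ne X_0\}$ denote the first time the level process moves. Since $\tau$ is a nonnegative integer random variable,
\begin{equation*}
E[\tau] = \sum_{t=0}^{\infty} P(\tau > t).
\end{equation*}
The plan is to show $P(\tau > t) \le (1-p)^t$ (respectively, $\ge (1-p)^t$) and then sum the geometric series.

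The key step is a one-step conditional bound. The event $\{\tau > t\}$ coincides with the event that $Y_0, Y_1, \ldots, Y_t$ all lie in $\ell^{-1}(i_0)$. By the Markov property of $(Y_t)_t$, the conditional probability that $Y_{t+1}$ remains on level $i_0$ given the history $Y_0, \ldots, Y_t$ depends only on $Y_t$, and by hypothesis it is at most $1-p$ whenever $\ell(Y_t) = i_0$ (since the probability to leave level $i_0$ from any such state is at least $p$). Taking expectations over the history restricted to $\{\tau > t\}$ yields $P(\tau > t+1 \mid \tau > t) \le 1-p$, so
\begin{equation*}
P(\tau > t+1) \le (1-p)\, P(\tau > t).
\end{equation*}
Induction on $t$ with the trivial base $P(\tau > 0) = 1$ (or $\le 1$) then gives $P(\tau > t) \le (1-p)^t$, and summing yields $E[\tau] \le \sum_{t\ge 0}(1-p)^t = 1/p$.

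For the second part (upper bound $p$ on the leaving probability), the same argument runs with every inequality reversed: from each state on level $i_0$ the probability to stay is at least $1-p$, so $P(\tau > t) \ge (1-p)^t$ and the tail sum gives $E[\tau] \ge 1/p$ (including the case $p = 0$, where the bound is vacuous or $+\infty$ as appropriate; for $p > 0$ the geometric series converges to $1/p$).

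I expect the only subtle point to be a clean justification of the one-step conditional bound, since the leaving probability is a function of the underlying state $Y_t$ rather than of the level $X_t$ alone; this is exactly what the phrasing ``regardless of the state of the underlying Markov chain'' in the hypothesis is designed to supply, and once made explicit, the induction is routine. No independence across iterations is needed, which matches the remark in the text that $\tau$ need not be literally geometric.
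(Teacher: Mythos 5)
Your proof is correct, but it follows a genuinely different route from the paper. The paper's proof is a two-line reduction to the additive drift theorem: it defines the indicator process $Z_t = 1$ iff $X_t > X_0$, observes that $E[Z_{t+1} - Z_t \mid Z_t] \geq p$ (respectively $\leq p$) before the hitting time, and cites the drift theorem of He and Yao to conclude $E[\tau] \le 1/p$ (respectively $\ge 1/p$). You instead prove the geometric tail bound $\Pr[\tau > t] \le (1-p)^t$ (resp.\ $\ge (1-p)^t$) directly, by induction via the one-step Markov conditioning, and then sum the tail series. Both arguments are sound; the one-step conditional bound you single out as the subtle point is indeed the crux, and your justification of it (the hypothesis holds \emph{for every state} of the underlying chain on the level, so conditioning on the full history reduces to conditioning on $Y_t$) is exactly right. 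What each approach buys: the paper's version is shorter and delegates all probabilistic work to a standard, already-cited tool, which fits the paper's style; your version is self-contained and in fact proves something strictly stronger, namely the stochastic domination of $\tau$ by (resp.\ of) a geometric distribution with success rate $p$ --- precisely the refinement that the paper mentions in the paragraph preceding the lemma (``one could show that the time to progress is stochastically dominated by a geometric distribution \dots'') but explicitly declines to establish. One cosmetic remark: the lemma assumes $p \in (0,1]$ in both directions, so your aside about the degenerate case $p=0$ is unnecessary.
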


\begin{proof}
We let $(Z_t)_t$ be the stochastic process on $\{0,1\}$ such that $Z_t$ is $1$ if and only if $X_t > X_0$. According to our assumptions, we have, for all $t$ before the first time that $Z_t=1$, that $E[Z_{t+1} - Z_t \mid Z_t] \geq p$. From the additive drift theorem \cite{HeY01,Lengler20bookchapter} we obtain that the expected first time such that $Z_t = 1$ is bounded by $1/p$ as desired. The ``analogously'' clause follows analogously.
\end{proof}

\subsection{Original Fitness Level Method}
\label{sec:originalFLM}

The following theorem contains the original Fitness Level Method and makes the basic principle formal.

\begin{theorem}[Fitness Level Method, upper bound {\cite{Wegener02}}]\label{thm:classic}
Let $(X_t)_t$ be a non-decreasing level process (as detailed in Definition~\ref{def:levelProcess}).

 For all $i \in [1..m-1]$, let $p_i$ be a lower bound on the probability of a state change of $(X_t)_t$, conditional on being in state~$i$. Then the expected time for $(X_t)_t$ to reach the state $m$ is 
\[
E[T] \leq \sum_{i=1}^{m-1} \frac{1}{p_i}.
\]
\end{theorem}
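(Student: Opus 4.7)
The plan is to write the total runtime $T$ as a sum of per-level waiting times and then bound each summand by means of Lemma~\ref{lem:geometricDistribution}. Concretely, for each $i \in [1..m-1]$ let $T_i$ denote the (random) number of iterations during which the process sits on level $i$, with the convention $T_i = 0$ if level $i$ is never visited. Since $(X_t)_t$ is non-decreasing and must eventually reach level $m$, the equality $T = \sum_{i=1}^{m-1} T_i$ holds almost surely, and linearity of expectation gives $E[T] = \sum_{i=1}^{m-1} E[T_i]$.

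Next I would fix a level $i$ and show $E[T_i] \le 1/p_i$. Let $v_i$ denote the probability that level $i$ is ever visited; the case $v_i = 0$ is trivial. Otherwise I would condition on the first hitting time $\tau_i := \min\{t : X_t = i\}$ and on the value of $Y_{\tau_i}$. Because $(Y_t)_t$ is a Markov chain (Definition~\ref{def:levelProcess}), the strong Markov property implies that the shifted process $(Y_{\tau_i + t})_t$ is itself a Markov chain started at $Y_{\tau_i}$, and the induced level process $(X_{\tau_i + t})_t$ is a non-decreasing level process that starts on level $i$. The hypothesis of the theorem provides a lower bound $p_i$ on the probability of a state change out of level $i$ regardless of the current state in $\ell^{-1}(i)$, so Lemma~\ref{lem:geometricDistribution} yields $E[T_i \mid \text{level $i$ is visited}] \le 1/p_i$ and therefore $E[T_i] \le v_i / p_i$.

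Summing over $i$ and using the trivial inequality $v_i \le 1$ gives
\[
E[T] \;=\; \sum_{i=1}^{m-1} E[T_i] \;\le\; \sum_{i=1}^{m-1} \frac{v_i}{p_i} \;\le\; \sum_{i=1}^{m-1} \frac{1}{p_i},
\]
which is exactly the claimed bound. I do not anticipate any serious obstacle: the theorem essentially reduces to Lemma~\ref{lem:geometricDistribution} applied once per level, together with the observation that a non-decreasing process visits each level at most once, so that the per-level waiting times really do add up to $T$. The only mildly delicate point is the appeal to the strong Markov property at the (possibly infinite) stopping time $\tau_i$, but this is standard for the time-homogeneous chain $(Y_t)_t$ provided by Definition~\ref{def:levelProcess}.
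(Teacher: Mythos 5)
Your proof is correct and follows essentially the same route as the paper: while the paper states Theorem~\ref{thm:classic} without proof (citing Wegener), its own Theorems~\ref{thm:fbp_visit_lower} and~\ref{thm:fbp_visit_upper} are proved by exactly your decomposition $T=\sum_{i=1}^{m-1}T_i$ combined with Lemma~\ref{lem:geometricDistribution}, and Theorem~\ref{thm:classic} is precisely the special case obtained by your final step of bounding the visit probabilities $v_i\le 1$. Your appeal to the strong Markov property at $\tau_i$ is a sound, slightly more rigorous treatment of a conditioning step the paper handles informally.
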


This bound is very simple, yet strong. It is based on the idea that, in the worst case, all levels have to be visited sequentially. Note that one can improve this bound (slightly) by considering only those levels which come after the (random) start level $X_0$ (by changing the start of the sum to $X_0$ instead of $1$). Intuitively, low levels that are never visited do not need to be left.

There is a lower bound based on the observation that at least the initial level has to be left (if it was not the last level).

\begin{theorem}[Fitness Level Method, lower bound {\cite{Wegener02}}]\label{thm:FLMlow}
Let $(X_t)_t$ be a non-decreasing level process (as detailed in Definition~\ref{def:levelProcess}).

For all $i \in [1..m-1]$, let $p_i$ be an upper bound on the probability of a state change, conditional on being in state $i$. Then the expected time for $(X_t)_t$ to reach the state $m$ is 
$$
E[T] \geq \sum_{i=0}^{m-1} \Pr[X_0 = i] \frac{1}{p_i}.
$$	
\end{theorem}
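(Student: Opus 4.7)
My plan is to derive this lower bound as a short corollary of Lemma~\ref{lem:geometricDistribution}, combined with the law of total expectation. The guiding intuition is exactly the one indicated in the paragraph preceding the theorem: however the initial level $X_0$ is distributed, the process must at the very least leave the level it starts on (unless it already starts at the target level $m$), and the expected time for this first level change is governed by the lemma.

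Concretely, I would first condition on the initial level. For each $i \in [1..m-1]$, note that the first time $T$ at which $X_T = m$ is at least the first time at which $X_t$ changes its value, since the process is non-decreasing and starts strictly below $m$. Hence, invoking the ``analogously'' part of Lemma~\ref{lem:geometricDistribution} with the upper bound $p_i$ on the probability of a state change from level $i$ (and any underlying state $Y_t$ with $\ell(Y_t) = i$), I obtain
\[
E[T \mid X_0 = i] \;\geq\; \frac{1}{p_i}.
\]
For $i = m$, trivially $E[T \mid X_0 = m] = 0$, so that level contributes nothing.

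Second, I would combine these conditional bounds by the law of total expectation:
\[
E[T] \;=\; \sum_{i=1}^{m} \Pr[X_0 = i]\, E[T \mid X_0 = i] \;\geq\; \sum_{i=1}^{m-1} \Pr[X_0 = i] \frac{1}{p_i},
\]
which matches the stated inequality (with the $i=0$ summand being vacuous, since $\Pr[X_0 = 0] = 0$ given that $X_t$ takes values in $[1..m]$).

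There is essentially no real obstacle here: the entire content of the theorem has been packaged into Lemma~\ref{lem:geometricDistribution}, whose proof via additive drift already handles the subtle point that $T - t$ is not independent of the history when conditioning on ``the level has not yet been left''. The only thing one must be slightly careful with is that $p_i$ upper-bounds the level-leaving probability uniformly over all states $y \in S$ with $\ell(y) = i$, so that the lemma applies regardless of where on level $i$ the underlying Markov chain sits at the moment we condition on $X_0 = i$. Once this is observed, the proof reduces to the two lines above.
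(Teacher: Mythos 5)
Your proof is correct. Note that the paper itself gives no proof of Theorem~\ref{thm:FLMlow}: it is quoted from Wegener's work, accompanied only by the one-line intuition that ``at least the initial level has to be left.'' Your argument is the natural formalization of exactly that intuition, and it uses the paper's machinery in the intended way --- the ``analogously'' clause of Lemma~\ref{lem:geometricDistribution} handles the waiting time on the initial level, and the law of total expectation handles the random start; your care about the $i=0$ summand and about $p_i$ bounding the leaving probability uniformly over all underlying states $y$ with $\ell(y)=i$ is exactly right, the latter being the same reading of ``conditional on being in state $i$'' that the paper uses in Theorems~\ref{thm:classic} and~\ref{thm:fbp_visit_lower}. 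One alternative route worth noting: the statement is also an immediate special case of the paper's own Theorem~\ref{thm:fbp_visit_lower}, obtained by taking the visit-probability lower bound $v_i = \Pr[X_0 = i]$ (starting on level $i$ certainly implies visiting it). This is what the paper alludes to when it remarks that initialization effects are ``part of the visit probabilities'' in the new method. That derivation is shorter still, but it presupposes the stronger theorem; your direct argument has the advantage of being self-contained at the point where Theorem~\ref{thm:FLMlow} appears in the text, before the visit-probability method is introduced.
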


This bound is very weak since it assumes that the first improvement on the initial search point already finds the optimum.

We note, very brief{}ly, that a second main analysis method, \emph{drift analysis}, also has additional difficulties with lower bounds. Additive drift~\cite{HeY01}, multiplicative drift~\cite{DoerrJW12algo}, and variable drift~\cite{MitavskiyRC09,Johannsen10} all easily give upper bounds for run times, however, only the additive drift theorem yields lower bounds with the same ease. The existing multiplicative~\cite{Witt13,DoerrDK18,DoerrKLL20} and variable~\cite{DoerrFW11,FeldmannK13,GiessenW18,DoerrDY20} drift theorems for lower bounds all need significantly stronger assumptions than their counterparts for upper bounds.

\subsection{Fitness Level Method with Viscosity}
\label{sec:flmWithViscosities}

While the upper bound above is strong and useful, the lower bound is typically not strong enough to give more than a trivial bound. Sudholt~\cite{Sudholt13} gave a refinement of the method by considering bounds on the transition probabilities from one level to another.

\begin{theorem}[Fitness Level Method with Viscosity, lower bound {\cite{Sudholt13}}]
Let $(X_t)_t$ be a non-decreasing level process (as detailed in Definition~\ref{def:levelProcess}).
Let $\chi,\gamma_{i,j} \in [0,1]$ and $p_i \in (0,1]$ be such that
\begin{itemize}
	\item for all $t$, if $X_t = i$, the probability that $X_{t+1} = j$ is at most $p_i \cdot \gamma_{i,j}$;
	\item $\sum_{j=i+1}^m \gamma_{i,j} = 1$; and
	\item for all $j>i$, we have $\gamma_{i,j} \geq \chi \sum_{k=j}^m \gamma_{i,k}$.
\end{itemize}
Then the expected time for $(X_t)_t$ to reach the state $m$ is 
$$
E[T] \geq \sum_{i=1}^{m-1} \Pr[X_0 = i] \; \chi \sum_{j=i}^{m-1} \frac{1}{p_j}.
$$	
\end{theorem}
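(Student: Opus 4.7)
I plan to bound $E[T]$ by decomposing it level by level and combining a lower bound on the expected waiting time per level with a lower bound on the probability that each level is ever visited.

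For the waiting time, the first hypothesis together with $\sum_{j > k}\gamma_{k,j}=1$ yields $\Pr[X_{t+1} > k \mid X_t = k] \le p_k$; by the ``analogously'' part of Lemma~\ref{lem:geometricDistribution}, the expected time spent on level $k$, conditional on ever visiting $k$, is at least $1/p_k$.

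For the visit probability, let $V_k$ be the event that level $k$ is ever visited and set $q_i^{(k)} := \Pr[V_k \mid X_0 = i]$. The goal is $q_i^{(k)} \ge \chi$ for all $i \le k$. Working with the tight transition probabilities $\Pr[X_{t+1}=j\mid X_t=i] = p_i\gamma_{i,j}$ (the setting where the upper bound is attained, which is the intended reading since $\sum_{j>i}\Pr[X_{t+1}=j\mid X_t=i]$ must equal $p_i$), the destination of the first jump out of level $i$ has distribution $\gamma_{i,\cdot}$, so a first-step analysis gives, for $i < k$,
\[
q_i^{(k)} \;=\; \gamma_{i,k} + \sum_{i < j < k} \gamma_{i,j}\, q_j^{(k)},
\]
with boundary $q_k^{(k)} = 1$. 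I would then argue by backward induction on $i$ from $k$: assuming $q_j^{(k)} \ge \chi$ for all $i < j < k$, one has $q_i^{(k)} \ge \gamma_{i,k} + \chi \sum_{i<j<k} \gamma_{i,j}$, whereupon the viscosity hypothesis $\gamma_{i,k} \ge \chi\sum_{j \ge k} \gamma_{i,j}$ together with $\sum_{j > i}\gamma_{i,j}=1$ yields
\[
q_i^{(k)} \;\ge\; \chi\sum_{j \ge k}\gamma_{i,j} + \chi\sum_{i<j<k}\gamma_{i,j} \;=\; \chi\sum_{j>i}\gamma_{i,j} \;=\; \chi.
\]

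To assemble the two ingredients, I would write $T = \sum_{k=1}^{m-1} T_k$, where $T_k$ counts the steps spent on level $k$, use $E[T_k \mid X_0 = i] \ge q_i^{(k)}/p_k$, and average over $X_0$ to obtain
\[
E[T] \;\ge\; \sum_{i=1}^{m-1}\Pr[X_0 = i]\sum_{k=i}^{m-1} q_i^{(k)}\,\frac{1}{p_k} \;\ge\; \sum_{i=1}^{m-1}\Pr[X_0 = i]\,\chi\sum_{k=i}^{m-1}\frac{1}{p_k},
\]
which is the stated bound. The main obstacle is the visit-probability step: getting the recursion for $q_i^{(k)}$ right depends on identifying the first-jump distribution with $\gamma_{i,\cdot}$, and the backward induction succeeds only because the viscosity condition is exactly tailored to close the gap between $\gamma_{i,k}$ and the tail $\sum_{j \ge k}\gamma_{i,j}$. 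Once this is in place, the waiting-time estimate and the final averaging are routine.
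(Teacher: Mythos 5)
You should first know that the paper does not prove this theorem at all: it is stated as background and attributed to \cite{Sudholt13}, so your argument can only be measured against the statement itself, not against a proof in the paper. Your overall architecture is the natural one, and it coincides with the paper's own machinery: the decomposition $T=\sum_k T_k$ with $E[T_k\mid X_0=i]\ge q_i^{(k)}/p_k$ is exactly Theorem~\ref{thm:fbp_visit_lower} with visit probabilities $v_k=\chi$, the waiting-time step is the ``analogously'' part of Lemma~\ref{lem:geometricDistribution}, and your backward induction correctly shows that the viscosity condition forces every visit probability to be at least $\chi$ \emph{provided} the law of the first jump out of level $i$ is $\gamma_{i,\cdot}$.

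That proviso is the genuine gap. You obtain it by replacing the hypothesis that $\Pr[X_{t+1}=j\mid X_t=i]$ is \emph{at most} $p_i\gamma_{i,j}$ with equality, on the grounds that the total leaving probability ``must equal'' $p_i$. Nothing in the statement says this: all hypotheses are one-sided, and a process can satisfy them while its jump law looks nothing like $\gamma_{i,\cdot}$. Nor is passing to the extremal process a harmless worst-case reduction: raising a transition probability into an intermediate level can \emph{increase} the hitting time (jumping into a slow level is costly), and lowering one can delete that level from the trajectory entirely, so $E[T]$ moves in an uncontrolled direction. In fact, under the literal reading the theorem as stated is false, so the gap cannot be repaired by any argument. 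Take $m=3$, start deterministically on level $1$, and let $\Pr[X_{t+1}=3\mid X_t=1]=\varepsilon$, $\Pr[X_{t+1}=2\mid X_t=1]=0$, $\Pr[X_{t+1}=3\mid X_t=2]=q$; choose $\chi=\gamma_{1,2}=\gamma_{1,3}=\tfrac12$, $\gamma_{2,3}=1$, $p_1=2\varepsilon$, $p_2=q$. All three hypotheses hold, yet level $2$ is never visited, so the conclusion $q_1^{(2)}\ge\chi$ of your recursion fails; moreover $E[T]=1/\varepsilon$, while the claimed bound $\tfrac12\bigl(\tfrac{1}{2\varepsilon}+\tfrac1q\bigr)$ exceeds $1/\varepsilon$ as soon as $q<\varepsilon/2$. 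So what you have is a correct proof of a \emph{corrected} statement, in which the transition probabilities are exactly (or at least proportionally) $p_i\gamma_{i,j}$ --- which is indeed the reading under which Sudholt's applications make sense --- but the justification you give for adopting that reading is not valid, and the theorem in the form stated here admits no proof.
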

This result is much stronger than the original lower bound from Fitness Level Method, since now the leaving probabilities of all segments are part of the bound, at least with a fractional impact prescribed by $\chi$. The weakness of the method is that $\chi$ has to be defined globally, the same for all segments~$i$.

There is also a corresponding upper bound as follows.
\begin{theorem}[Fitness Level Method with Viscosity, upper bound {\cite{Sudholt13}}]
Let $(X_t)_t$ be a non-decreasing level process (as detailed in Definition~\ref{def:levelProcess}).
Let $\chi,\gamma_{i,j} \in [0,1]$ and $p_i \in (0,1]$ be such that
\begin{itemize}
	\item for all $t$, if $X_t = i$, the probability that $X_{t+1} = j$ is at least $p_i \cdot \gamma_{i,j}$;
	\item $\sum_{j=i+1}^m \gamma_{i,j} = 1$;
	\item for all $j>i$, we have $\gamma_{i,j} \leq \chi \sum_{k=j}^m \gamma_{i,k}$; and
	\item for all $j \leq m-2$, we have $(1-\chi)p_j \leq p_{j+1}$.
\end{itemize}
Then the expected time for $(X_t)_t$ to reach the state $m$ is 
$$
E[T] \leq \sum_{i=1}^{m-1} \Pr[X_0 = i] \; \left(\frac{1}{p_j}+\chi \sum_{j=i+1}^{m-1} \frac{1}{p_j}\right).
$$	
\end{theorem}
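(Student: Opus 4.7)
The plan is to apply the additive drift theorem to the potential
\[
\phi(i) = \frac{1}{p_i} + \chi \sum_{k=i+1}^{m-1}\frac{1}{p_k}\quad(i<m), \qquad \phi(m)=0,
\]
reading the evident typo ``$1/p_j$'' in the theorem's first summand as ``$1/p_i$''. Since $\phi$ is non-negative and vanishes on the target state $m$, showing the per-step drift $E[\phi(X_t)-\phi(X_{t+1})\mid X_t = i] \ge 1$ for every $i<m$ will yield $E[T\mid X_0 = i]\le \phi(i)$, and summing against $\Pr[X_0 = i]$ produces the claimed bound.

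First I would verify that $\phi$ is non-increasing in $i$, because this lets me replace the unknown transition probabilities by the lower bound $q_{i,j} := \Pr[X_{t+1}=j\mid X_t = i] \ge p_i\gamma_{i,j}$ without flipping the direction of the drift inequality. A short telescoping calculation gives $\phi(j-1)-\phi(j) = \frac{1}{p_{j-1}} - \frac{1-\chi}{p_j}$ for $j<m$, which is non-negative by the chaining hypothesis $(1-\chi)p_{j-1}\le p_j$; the case $j=m$ is immediate since $\phi(m)=0$. Combining monotonicity with the non-decreasing property and $\sum_{j\ge i}q_{i,j}=1$ turns the drift into
\[
\sum_{j=i+1}^m q_{i,j}\bigl(\phi(i)-\phi(j)\bigr) \;\ge\; p_i\biggl(\phi(i) - \sum_{j=i+1}^{m-1}\gamma_{i,j}\phi(j)\biggr),
\]
where the $j=m$ contribution on the right vanishes because $\phi(m)=0$ (note $\sum_{j=i+1}^m\gamma_{i,j}=1$).

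The core of the proof is then a bookkeeping step. I would substitute the definition of $\phi(j)$ inside the inner sum and swap the order of the resulting double sum $\sum_{j}\gamma_{i,j}\sum_{k>j}1/p_k$ to rewrite it via the tail masses $G_k := \sum_{\ell\ge k}\gamma_{i,\ell}$ (so $G_{i+1}=1$ and $G_{m+1}=0$). After the clean $1/p_i$ term drops out, establishing drift $\ge 1$ reduces to showing
\[
\chi \sum_{k=i+1}^{m-1}\frac{G_k}{p_k} \;\ge\; \sum_{k=i+1}^{m-1}\frac{\gamma_{i,k}}{p_k},
\]
which holds termwise directly from the viscosity assumption $\gamma_{i,k}\le \chi\sum_{\ell\ge k}\gamma_{i,\ell}=\chi G_k$. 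The main obstacle is not this inequality but the index bookkeeping around it: making sure the $j=m$ boundary drops out cleanly, the swap of summation produces exactly the tail masses $G_k$ (and not, say, $G_{k+1}$), and the chaining hypothesis is used only where it is really needed, namely to keep $\phi$ monotone so that the single appeal to $q_{i,j}\ge p_i\gamma_{i,j}$ preserves direction.
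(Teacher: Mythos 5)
Your proof is correct, but there is nothing in this paper to compare it against: the theorem is stated as an imported result, cited from \cite{Sudholt13}, and the paper gives no proof of it (nor of the companion viscosity lower bound). Judging your argument on its own merits, it is sound. The telescoping step $\phi(j-1)-\phi(j)=\frac{1}{p_{j-1}}-\frac{1-\chi}{p_j}\ge 0$ is valid exactly where you need it (the chaining hypothesis applies because $j-1\le m-2$); the substitution of the lower bounds $\Pr[X_{t+1}=j\mid X_t=i]\ge p_i\gamma_{i,j}$ is legitimate precisely because monotonicity makes every difference $\phi(i)-\phi(j)$, $j>i$, non-negative; and the summation swap does reduce the drift requirement to $\sum_{k=i+1}^{m-1}(\chi G_k-\gamma_{i,k})/p_k\ge 0$ (the boundary term is absorbed via $G_{i+1}=1$), which holds termwise by the viscosity hypothesis. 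Your reading of $1/p_j$ as $1/p_i$ in the first summand is also clearly the intended one. Two points deserve to be made explicit in a full write-up. First, the hypothesis ``if $X_t=i$, then $\Pr[X_{t+1}=j]\ge p_i\gamma_{i,j}$'' must be read as holding uniformly over the states of the underlying Markov chain (in the sense of Definition~\ref{def:levelProcess}, and as the paper does in Lemma~\ref{lem:geometricDistribution}); otherwise the per-step drift conditional on the history is not controlled and the additive drift theorem cannot be invoked. Second, one should use a version of the additive drift theorem for upper bounds that does not presuppose $E[T]<\infty$ (or note that finiteness follows from the finite state space and the uniformly positive leaving probabilities $p_i$). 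As a comparison with the source: the original argument in \cite{Sudholt13} works with expected remaining hitting times level by level, for which your potential $\phi(i)$ is exactly the claimed bound; your drift formulation packages that same computation so that the recursion over levels is replaced by a single one-step drift estimate, which is arguably cleaner, at the price of the monotonicity check whose role you correctly identified.
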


\subsection{Fitness Level Method with Visit Probabilities}
\label{sec:flmWithVisitProbabilities}

In this paper, we give a new FLM theorem for proving lower bounds. The idea is that exactly all those levels that have ever been visited need to be left; thus, we can use the expected waiting time for leaving a specific level multiplied with the probability of visiting that level at all. The following theorem makes this idea precise for lower bounds; Theorem~\ref{thm:fbp_visit_upper} gives the corresponding upper bound. We note that for the particular case of the optimization of the \leadingones problem via $(1+1)$-type elitist algorithms, our bounds are special cases of~\cite[Lemma~5]{DoerrJWZ13} and~\cite[Theorem~3]{Doerr19tcs}.

\begin{theorem}[Fitness Level Method with visit probabilities, lower bound]\label{thm:fbp_visit_lower}
Let $(X_t)_t$ be a non-decreasing level process (as detailed in Definition~\ref{def:levelProcess}).
For all $i \in [1..m-1]$, let $p_i$ be an upper bound on the probability of a state change of $(X_t)_t$, conditional on being in state~$i$. 
Furthermore, let $v_i$ be a lower bound on the probability of there being a $t$ such that $X_t = i$.
Then the expected time for $(X_t)_t$ to reach the state $m$ is 
$$
E[T] \geq \sum_{i=1}^{m-1} \frac{v_i}{p_i}.
$$	
\end{theorem}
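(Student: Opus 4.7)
The plan is to decompose the runtime $T$ as a sum over levels of the time spent on each level, and to lower-bound the expected time spent on level $i$ by $v_i/p_i$ using Lemma~\ref{lem:geometricDistribution}.

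Concretely, for $i \in [1..m-1]$ let $T_i := |\{t \ge 0 : X_t = i\}|$ denote the number of iterations the process spends on level $i$, and let $V_i$ be the event that level $i$ is visited at some point, i.e., $V_i = \{\exists t : X_t = i\}$. Since the process is non-decreasing and reaches $m$ at time $T$, every $t < T$ belongs to some level in $[1..m-1]$, so $T = \sum_{i=1}^{m-1} T_i$ and hence $E[T] = \sum_{i=1}^{m-1} E[T_i]$ by linearity. If $V_i$ does not occur then $T_i = 0$, so
\[
E[T_i] = E[T_i \mid V_i]\, \Pr[V_i] \ge E[T_i \mid V_i]\, v_i,
\]
using that $v_i \le \Pr[V_i]$ by hypothesis. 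It therefore suffices to show that $E[T_i \mid V_i] \ge 1/p_i$, after which summing over $i$ yields the claim.

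For the conditional bound, let $\tau_i := \inf\{t : X_t = i\}$, which is a stopping time for the underlying Markov chain $(Y_t)_t$ and is almost surely finite on $V_i$. By the strong Markov property, conditional on $V_i$ and on $Y_{\tau_i}$, the process $(Y_{\tau_i + s})_{s \ge 0}$ is again a Markov chain started from $Y_{\tau_i}$, and the induced level process $(X_{\tau_i + s})_s$ is a non-decreasing level process starting on level $i$. By assumption, from any state whose level is $i$, the one-step probability of changing level is at most $p_i$. Applying the ``analogously'' part of Lemma~\ref{lem:geometricDistribution} to this shifted process shows that the expected first time at which its value exceeds $i$ is at least $1/p_i$; but this time is exactly $T_i$ conditional on $V_i$. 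Hence $E[T_i \mid V_i] \ge 1/p_i$, as needed.

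The only mildly delicate step is the shift argument: one has to invoke the strong Markov property at the (possibly defective) stopping time $\tau_i$ so that Lemma~\ref{lem:geometricDistribution} — which is stated for a process that starts on a given level — can be applied. Everything else is bookkeeping: linearity of expectation, the partition of $T$ into the per-level sojourn times, and the trivial bound $E[T_i] \ge v_i \cdot E[T_i \mid V_i]$ using that $T_i \ge 0$ and $T_i = 0$ outside $V_i$.
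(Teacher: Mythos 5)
Your proposal is correct and follows essentially the same route as the paper's own proof: the decomposition $T = \sum_{i=1}^{m-1} T_i$, the bound $E[T_i] \geq E[T_i \mid V_i]\, v_i$, and an application of Lemma~\ref{lem:geometricDistribution} to get $E[T_i \mid V_i] \geq 1/p_i$. The only difference is that you make explicit (via the strong Markov property at the stopping time $\tau_i$) the shift argument that the paper leaves implicit when it applies the lemma to the portion of the process after level $i$ is first reached.
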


\begin{proof}
For each $i < m$, let $T_i$ be the (random) time spent in level~$i$. Thus,
$$
T = \sum_{i=1}^{m-1} T_i.
$$
Let now $i < m$. We want to show that $E[T_i] \geq v_i/p_i$. We let $E$ be the event that the process ever visits level $i$ and compute
$$
E[T_i] = E[T_i \mid E]\Pr[E] + E[T_i \mid \overline{E}]\Pr[\overline{E}] \geq E[T_i \mid E]v_i.
$$
For all $t$ with $X_t = i$, with probability at most $p_i$, we have $X_{t+1} > i$. Thus, using Lemma~\ref{lem:geometricDistribution}, the expected time until a search point with $X_k > i$ is found is at least $1/p_i$, giving $E[T_i \mid E] \geq 1/p_i$ as desired.
\end{proof}

A strength of this formulation is that skipping levels due to a higher initialization does not need to be taken into account separately (as in the two previous lower bounds), it is part of the visit probabilities. A corresponding upper bound follows with analogous arguments.

\begin{theorem}[Fitness Level Method with visit probabilities, upper bound]\label{thm:fbp_visit_upper}
Let $(X_t)_t$ be a non-decreasing level process (as detailed in Definition~\ref{def:levelProcess}).

For all $i \in [1..m-1]$, let $p_i$ be a lower bound on the probability of a state change of $(X_t)_t$, conditional on being in state $i$. 
Furthermore, let $v_i$ be an upper bound on the probability there being a $t$ such that $X_t = i$.
Then the expected time for $(X_t)_t$ to reach the state $m$ is 
$$
E[T] \leq \sum_{i=1}^{m-1} \frac{v_i}{p_i}.
$$	
\end{theorem}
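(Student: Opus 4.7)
The plan is to mirror the proof of Theorem~\ref{thm:fbp_visit_lower}, reversing each inequality. I would decompose the total run time as $T = \sum_{i=1}^{m-1} T_i$, where $T_i$ is the (random) number of iterations the process spends on level~$i$, so by linearity of expectation it suffices to establish $E[T_i] \leq v_i/p_i$ for every $i \in [1..m-1]$.

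For a fixed level $i$, let $E$ denote the event that the process ever visits level~$i$. The crucial observation is that on $\overline{E}$ we have $T_i = 0$ deterministically, so $E[T_i \mid \overline{E}] = 0$ and therefore $E[T_i] = E[T_i \mid E] \Pr[E]$. The hypothesis that $v_i$ is an upper bound on the visit probability gives $\Pr[E] \leq v_i$, and the ``analogously'' clause of Lemma~\ref{lem:geometricDistribution} yields $E[T_i \mid E] \leq 1/p_i$ since $p_i$ lower-bounds the probability of a state change uniformly over all underlying Markov states that map to level~$i$. Combining these two bounds gives $E[T_i] \leq v_i/p_i$, and summing over $i$ completes the argument.

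The one subtlety I would make sure to justify cleanly is the application of Lemma~\ref{lem:geometricDistribution} after conditioning on $E$. Conditioning on $E$ fixes a (random) first hitting time $\tau_i$ of level~$i$, and I want to apply the lemma to the shifted process $(X_{\tau_i + s})_s$ starting at level~$i$. Because $(Y_t)_t$ is Markovian and the assumed bound $p_i$ on the level-leaving probability holds regardless of which underlying state witnesses being on level~$i$, the hypothesis of the lemma is satisfied after this conditioning, and the $1/p_i$ bound on the expected time to leave level~$i$ transfers directly. I do not anticipate any real obstacle here; the argument is structurally identical to that of Theorem~\ref{thm:fbp_visit_lower} and the only thing one must verify is the bookkeeping of the two cases $E$ and $\overline{E}$.
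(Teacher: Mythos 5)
Your proposal is correct and is exactly the argument the paper intends: the paper proves only the lower bound (Theorem~\ref{thm:fbp_visit_lower}) in detail and states that the upper bound ``follows with analogous arguments,'' and your proof is precisely that mirrored argument --- the decomposition $T = \sum_{i=1}^{m-1} T_i$, the observation that $T_i = 0$ on $\overline{E}$ so that $E[T_i] = E[T_i \mid E]\Pr[E] \leq v_i/p_i$, and the appeal to the first clause of Lemma~\ref{lem:geometricDistribution}. Your added remark on why conditioning on the visit event $E$ is harmless (the bound $p_i$ holds uniformly over underlying Markov states, so the strong Markov property at the hitting time of level~$i$ applies) is a point the paper's own lower-bound proof also glosses over, so it is a welcome clarification rather than a deviation.
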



In a typical application of the method of the FLM, finding good estimates for the leaving probabilities is easy. It is more complicated to estimate the visit probabilities accurately, so we propose one possible approach in the following lemma.

\begin{lemma}\label{lem:visitprob}
Let $(Y_t)_t$ be a Markov-process over state space $S$ and $\ell: S \rightarrow [1..m]$ a level function. For all $t$, let $X_t = \ell(Y_t)$ and suppose that $(X_t)_t$ is non-decreasing. Further, suppose that $(X_t)_t$ reaches state $m$ after a finite time with probability $1$.


Let $i < m$ be given. For any $x \in S$ and any set $M \subseteq S$, let $x \rightarrow M$ denote the event that the Markov chain with current state $x$ transitions to a state in $M$. For all $j$ let $A_j = \{s \in S \mid \ell(s) = j\}$. Suppose there is $v_i$ such that, for all $x \in A_{\leq i-1}$ with $\Pr[x \rightarrow A_{\geq i}] > 0$,
$$
\Pr[x \rightarrow A_i \mid x \rightarrow A_{\geq i}] \geq v_i,
$$
and 
$$
\Pr[Y_0 \in A_i \mid Y_0 \in A_{\geq i}] \geq v_i.
$$
Then $v_i$ is a lower bound for visiting level $i$ as required by Theorem~\ref{thm:fbp_visit_lower}.
\end{lemma}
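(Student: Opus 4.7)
The plan is to introduce the stopping time $\tau := \min\{t \ge 0 : X_t \ge i\}$, which is almost surely finite by the assumption that $(X_t)_t$ eventually reaches state $m$. Since $(X_t)_t$ is non-decreasing, the event that level $i$ is visited at some time is exactly the event $\{X_\tau = i\}$: if the process first reaches $A_{\ge i}$ by landing in $A_i$, then $i$ is visited; otherwise it lands in $A_{>i}$ and, by monotonicity, $i$ is never visited. So our goal reduces to showing $\Pr[X_\tau = i] \ge v_i$.

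Next I would decompose according to the value of $\tau$. For $\tau = 0$, we have $\{\tau = 0\} = \{Y_0 \in A_{\ge i}\}$, and the initialization hypothesis gives
\[
\Pr[X_0 = i,\, \tau = 0] \;=\; \Pr[Y_0 \in A_i] \;\ge\; v_i \cdot \Pr[Y_0 \in A_{\ge i}] \;=\; v_i \cdot \Pr[\tau = 0].
\]
For $t \ge 1$, I would condition on $Y_{t-1} = x$: the event $\{\tau \ge t\}$ is $\mathcal{F}_{t-1}$-measurable and forces $x \in A_{\le i-1}$, so by the Markov property
\[
\Pr[X_\tau = i,\, \tau = t] \;=\; \sum_{x \in A_{\le i-1}} \Pr[Y_{t-1} = x,\, \tau \ge t] \cdot \Pr[x \to A_i].
\]
On each such $x$ with $\Pr[x \to A_{\ge i}] > 0$ (the other $x$ do not contribute to $\{\tau = t\}$), the transition hypothesis gives $\Pr[x \to A_i] \ge v_i \cdot \Pr[x \to A_{\ge i}]$, and $\Pr[Y_{t-1}=x,\tau \ge t] \cdot \Pr[x \to A_{\ge i}] = \Pr[Y_{t-1}=x,\tau = t]$. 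Summing over $x$ yields
\[
\Pr[X_\tau = i,\, \tau = t] \;\ge\; v_i \cdot \Pr[\tau = t].
\]

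Finally I would sum over all $t \ge 0$ and use $\Pr[\tau < \infty] = 1$ to conclude
\[
\Pr[\exists t : X_t = i] \;=\; \Pr[X_\tau = i] \;=\; \sum_{t \ge 0} \Pr[X_\tau = i,\, \tau = t] \;\ge\; v_i \sum_{t \ge 0} \Pr[\tau = t] \;=\; v_i,
\]
which is exactly the lower bound required by Theorem~\ref{thm:fbp_visit_lower}. The only genuinely delicate step is the second one: one must be careful that the events $\{\tau \ge t\}$ and $\{Y_{t-1} = x\}$ are determined by $Y_0,\dots,Y_{t-1}$ so that the Markov property applies, and that the conditional hypothesis on $x \to A_i$ versus $x \to A_{\ge i}$ can be invoked exactly on those $x$ that actually contribute. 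Everything else is bookkeeping.
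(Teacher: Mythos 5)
Your proof is correct and takes essentially the same route as the paper's: both introduce the first hitting time of $A_{\geq i}$, observe that visiting level $i$ is exactly the event of landing in $A_i$ at that time, split into the cases $\tau = 0$ and $\tau \geq 1$, and apply the two hypotheses via the Markov property. The only difference is presentational: the paper conditions directly on $T \neq 0$ and the random state $Y_{T-1}$, while you decompose explicitly over the time $t$ and the state $x$, which makes the use of the Markov property more transparent but is the same argument.
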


\begin{proof}
Let $T$ be minimal such that $Y_T \in A_{\geq i}$. Then the probability that level $i$ is being visited is $\Pr[Y_T \in A_i]$, since $(X_t)_t$ is non-decreasing.

By the law of total probability we can show the claim by showing it first conditional on $T=0$ and then conditional on $T\neq 0$.

We have that $T = 0$ is equivalent to $Y_0 \in A_{\geq i}$, thus we have $\Pr[Y_T \in A_i \mid T = 0] \geq v_i$ from the second condition in the statement of the lemma.

Otherwise, let $x = Y_{T-1}$. Since $Y_T \in A_{\geq i}$,
\begin{eqnarray*}
\Pr[Y_T \in A_i \mid T \neq 0]
 & = & \Pr[Y_T \in A_i \mid Y_T \in A_{\geq i}, T \neq 0]\\
 & = & \Pr[x \rightarrow A_i \mid x \rightarrow A_{\geq i}, T \neq 0]\\
 & = & \Pr[x \rightarrow A_i \mid x \rightarrow A_{\geq i}].
\end{eqnarray*}
As $T$ was chosen minimally, we have $x \not\in A_{\geq i}$ and thus get the desired bound from the first condition in the statement of the lemma.
\end{proof}
Implicitly, the lemma suggests to take the minimum of all these conditional probabilities over the different choices for $x$. Note that this estimate might be somewhat imprecise since worst-case $x$ might not be encountered frequently. Also note that a corresponding upper bound for Theorem~\ref{thm:fbp_visit_upper} follows analogously.

\section{The Precise Run Time for \leadingones}
\label{sec:leadingones}

One of the classic fitness functions used for analyzing the optimization behavior of randomized search heuristics is the \leadingones function. Given a bit string $x$ of length $n$, the  \leadingones value of $x$ is defined as the number of $1$s in the bit string before the first $0$ (if any). 
In parallel independent work, the precise expected run time of the \oea on the \leadingones benchmark function was determined in \cite{BottcherDN10,Sudholt13}. 
Even more, the distribution of the run time was determined with variants of the FLM in \cite{DoerrJWZ13,Doerr19tcs}.
As a first simple application of our methods, we now determine the precise run time of the \oea on \leadingones via Theorems~\ref{thm:fbp_visit_lower} and~\ref{thm:fbp_visit_upper}.

\begin{theorem}\label{thm:LO}
Consider the \oea optimizing \leadingones with mutation rate $p$. Let $T$ be the (random) time for the \oea to find the optimum. Then
$$
E[T] = \frac{1}{2} \sum_{i=0}^{n-1} \frac{1}{(1-p)^i p}.
$$	
\end{theorem}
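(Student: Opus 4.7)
The plan is to apply the visit-probability FLM (Theorems~\ref{thm:fbp_visit_lower} and~\ref{thm:fbp_visit_upper}) simultaneously with matching estimates for $p_i$ and $v_i$, so that the upper and lower bounds coincide and give the exact expected run time.

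First I would set up the fitness-based partition $A_0, A_1, \dots, A_n$ where $A_i = \{x \in \{0,1\}^n : \leadingones(x) = i\}$, and let $X_t = \ell(Y_t)$ be the level of the best-so-far individual. The level-leaving probabilities are immediate: from any $x \in A_i$ with $i < n$, the probability to accept an offspring with strictly larger \leadingones value equals the probability of flipping the leading $0$ at position $i+1$ while preserving the first $i$ bits, i.e., $p_i = p(1-p)^i$ exactly, independent of the state within $A_i$.

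The core of the proof is to establish that the visit probabilities satisfy $v_i = 1/2$ for every $i \in \{0, \dots, n-1\}$. For this I would invoke the classical invariant that at every time $t$, conditional on $\leadingones(x_t) = j$, the bits of $x_t$ at positions $j+2, \dots, n$ are independent and uniform on $\{0,1\}$. This is true at $t=0$ because the initial search point is uniform, and propagates inductively: on rejected steps the conditional distribution is unchanged, on lateral moves within a level the mutations of positions $j+2, \dots, n$ turn uniform bits into uniform bits, and on accepted improvements from level $j$ to level $j' > j$ the positions $j'+2, \dots, n$ of the offspring are obtained from uniform bits of $x_t$ through independent flips, hence still uniform. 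Combined with the independence between the mutations at positions $\le j'+1$ (which are involved in the conditioning) and at positions $\ge j'+2$, the invariant is preserved.

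With this invariant in hand, I would apply Lemma~\ref{lem:visitprob}. For any $x \in A_j$ with $j < i$ such that $\Pr[x \to A_{\geq i}] > 0$, the event $x \to A_{\ge i}$ fixes the mutation pattern on positions $1, \dots, i$ (the first $j$ unchanged, position $j+1$ flipped, positions $j+2, \dots, i$ become~$1$), while the event $x \to A_i$ additionally requires that the post-mutation bit at position $i+1$ equals $0$. By the invariant applied to $x_t = x$, the bit at position $i+1$ of $x$ is uniform and independent of positions $\le i$, hence so is its image after mutation; therefore the conditional probability equals exactly $1/2$. The same calculation with $Y_0$ in place of $x$ gives $\Pr[Y_0 \in A_i \mid Y_0 \in A_{\ge i}] = 1/2$, and $v_0 = \Pr[X_0 = 0] = 1/2$. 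So $v_i = 1/2$ for every relevant $i$, and with identical reasoning these are also upper bounds on the visit probabilities, so Theorems~\ref{thm:fbp_visit_lower} and~\ref{thm:fbp_visit_upper} both yield
\[
E[T] = \sum_{i=0}^{n-1} \frac{v_i}{p_i} = \frac{1}{2}\sum_{i=0}^{n-1} \frac{1}{(1-p)^i p}.
\]
The main obstacle is the uniformity invariant; everything else is a direct substitution. Since this invariant is well-established, I would state it as a short inductive lemma and then let the FLM with visit probabilities do the bookkeeping.
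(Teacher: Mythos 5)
Your proposal is correct and follows essentially the same route as the paper: the same canonical level partition, the exact leaving probabilities $p_i = p(1-p)^i$, the uniformity of the bit at position $i+1$ before level $i$ is reached (giving $v_i = 1/2$ via Lemma~\ref{lem:visitprob} and its upper-bound analog), and the simultaneous application of Theorems~\ref{thm:fbp_visit_lower} and~\ref{thm:fbp_visit_upper}. The only cosmetic difference is that you state the uniformity invariant in a slightly stronger form (all bits beyond position $j+1$ jointly uniform) and spell out its inductive propagation, where the paper argues more tersely about the single relevant bit.
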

\begin{proof}
We want to apply Theorems~\ref{thm:fbp_visit_lower} and~\ref{thm:fbp_visit_upper} simultaneously.  We partition the search space in the canonical way such that, for all $i \leq n$, $A_i$ contains the set of all search points with fitness $i$. Now we need a precise result for the probability to leave a level and for the probability to visit a level.

First, we consider the probability $p_i$ to leave a given level $i < n$. Suppose the algorithm has a current search point in $A_i$, so it has $i$ leading $1$s and then a $0$. The algorithm leaves level $A_i$ now if and only if it flips the first $0$ of the bit string (probability of $p$) and no previous bits (probability $(1-p)^i$). Hence, $p_i = p(1-p)^i$.

Next we consider the probability $v_i$ to visit a level $i$. We claim that it is exactly $1/2$, following reasoning given in several places before~\cite{DrosteJW02,Sudholt13}. We want to use Lemma~\ref{lem:visitprob} and its analog for upper bounds. Let $i$ be given. For the initial search point, if it is at least on level $i$ (the condition considered by the lemma), the individual is on level $i$ if and only if the $i+1$st bit is a $0$, so exactly with probability $1/2$ as desired for both bounds. Before an individual with at least $i$ leading $1$s is created, the bit at position $i+1$ remains uniformly random (this can be seen by induction: it is uniform at the beginning and does not experience any bias in any iteration while no individual with at least $i$ leading $1$ is created). Once such an individual is created, if the bit at position $i+1$ is $1$, the level $i$ is skipped, otherwise it is visited. Thus, the algorithm skips level $i$ with probability exactly $1/2$, giving $v_i = 1/2$. With these exact values for the $p_i$ and $v_i$, Theorems~\ref{thm:fbp_visit_lower} and~\ref{thm:fbp_visit_upper} immediately yield the claim.
\end{proof}

By computing the geometric series in Theorem~\ref{thm:LO}, we obtain as a (well-known) corollary that the \oea with the classic mutation rate $p = 1/n$  optimizes \leadingones in an expected run time of $n^2\frac{e-1}{2}(1\pm o(1))$.

\section{A Tight Lower Bound for \onemax}
\label{sec:onemax}

In this section, as a first real example of the usefulness of our general method, we prove a lower bound for the run time of the \oea with standard mutation rate $p=\frac 1n$ on \onemax, which is only by an additive term of order $O(n)$ below the upper bound following from the classic fitness level method. This is tighter than the best gap of order $O(n \log\log n)$ proven previously with fitness level arguments. Moreover, our lower bound is the tightest lower bound apart from the significantly more complicated works that determine the run time precise apart from $o(n)$ terms. We defer a detailed account of the literature together with a comparison of the methods to Section~\ref{ssec:onemaxlit}.

We recall that the fitness levels of the \onemax function are given by	 
\[A_i \coloneqq \{x \in \{0,1\}^n \mid \OM(x) = i\}, i \in [0..n].\]
We use the notation $A_{\ge i} \coloneqq \bigcup_{j = i}^n A_j$ and $A_{\le i} \coloneqq \bigcup_{j = 0}^i A_j$ for all $i \in [0..n]$ as defined above for fitness-based partitions, but with the appropriate bounds $0$ and $n$ instead of $1$ and $m$.

We denote by $T_{k,\ell}$ the expected number of iterations the $\oea$, started with a search point in $A_k$, takes to generate a search point in $A_{\ge \ell}$. We further denote by $T_{\rand,\ell}$ the expected number of iterations the \oea started with a random search point takes to generate a solution in $A_{\ge \ell}$. These notions extend previously proposed fine-grained run time notions: $T_{\rand,\ell}$ is the fixed target run time first proposed in~\cite{DoerrJWZ13} as a technical tool and advocated more broadly in~\cite{BuzdalovDDV20}. The time $T_{k,n}$ until the optimum is found when starting with fitness $k$ was investigated in~\cite{AntipovBD20ppsn} when $k > n/2$, that is, when starting with a better-than-average solution. We spare the details and only note that such fine-grained complexity notions (which also include the fixed-budget complexity proposed in~\cite{JansenZ14}) have given a much better picture on how to use EAs effectively than the classic run time $T_{\rand,n}$ alone. In particular, it was observed that different parameters or algorithms are preferable when not optimizing  until the optimum or when starting with a good solution.

For all $k, \ell \in [0..n]$, we denote by $p_{k,\ell}$ the probability that standard bit mutation with mutation rate $p = \frac 1n$ creates an offspring in $A_\ell$ from a parent in $A_k$. We also write $p_{k,\ge \ell} := \sum_{j = \ell}^n p_{k,j}$ to denote the probability to generate an individual in $A_{\ge \ell}$ from a parent in $A_k$. Then $p_i := p_{i, \ge i+1}$ is the probability that the \oea optimizing \onemax leaves the $i$-th fitness level.

\subsection{Upper and Lower Bounds Via Fitness Levels}

Using the notation just introduced, the classic fitness level method (see Theorem~\ref{thm:classic} and note that the fitness of the parent individuals describes a non-decreasing level process with state change probabilities $p_i$) shows that 
\[
T_{k,\ell} \le \sum_{i = k}^{\ell-1} \frac 1 {p_{i}} =: \tilde T_{k,\ell}.
\]

To prove a nearly matching lower bound employing our new methods, we first analyze the probability that the \oea optimizing \onemax skips a particular fitness level. Note that if $q_i$ is the probability to skip the $i$-th fitness level, then $v_i := 1 - q_i$ is the probability to visit the $i$-th level as used in Theorem~\ref{thm:fbp_visit_lower}.

\begin{lemma}\label{lmiss}
  Let $i \in [0..n]$. Consider a run of the \oea with mutation rate $p = \frac 1n$ on the \onemax function started with a (possibly random) individual $x$ with $\onemax(x) < i$. Then the probability $q_i$ that during the run the parent individual never has fitness $i$ satisfies \[q_i \le \frac{n-i}{n(1-\frac 1n)^{i-1}}.\]
\end{lemma}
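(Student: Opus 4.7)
The plan is to apply the natural upper-bound analog of Lemma~\ref{lem:visitprob} to the canonical \onemax fitness partition $A_0, \ldots, A_n$. Since the initial individual lies in $A_{\le i-1}$, the initialization clause of that lemma is vacuous, and it suffices to show that for every $x$ with $\OM(x) = k \in [0..i-1]$,
\[
\frac{\Pr[\mutate(x) \in A_{\ge i+1}]}{\Pr[\mutate(x) \in A_{\ge i}]} \;\le\; \frac{n - i}{n (1 - 1/n)^{i-1}}.
\]
By the symmetry of \onemax this ratio depends only on $k$, so I would fix an arbitrary representative $x$ of level $k$ and argue combinatorially on its mutations.

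The key step is a weight-counting bijection between mutations. For a mutation $\sigma$ acting on $x$, let $Z_\sigma$ denote the set of zero-bits of $x$ that $\sigma$ flips; the offspring has fitness $k + |Z_\sigma| - o_\sigma$, where $o_\sigma$ is the number of flipped one-bits. I would pair each $(\sigma, b)$ with $\mathrm{off}(\sigma) \in A_{\ge i+1}$ and $b \in Z_\sigma$ with the mutation $\sigma' := \sigma \setminus \{b\}$, i.e.\ the one that agrees with $\sigma$ but does not flip~$b$. Then $\mathrm{off}(\sigma') \in A_{\ge i}$ and $\Pr[\sigma] = \Pr[\sigma']/(n-1)$. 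Evaluating the total probability of the pairs in two ways yields
\[
(i + 1 - k)\, \Pr[\mathrm{off} \in A_{\ge i+1}] \;\le\; \tfrac{n-i}{n-1}\, \Pr[\mathrm{off} \in A_{\ge i}],
\]
because on the event $\mathrm{off}(\sigma) \in A_{\ge i+1}$ one has $|Z_\sigma| \ge i + 1 - k$ (so each such $\sigma$ contributes at least $i + 1 - k$ pairs), while on the event $\mathrm{off}(\sigma') \in A_{\ge i}$ one has $|Z_{\sigma'}| \ge i - k$, giving at most $(n-k) - (i-k) = n - i$ preimages per $\sigma'$. Rearranging produces the ratio estimate $\tfrac{n-i}{(i-k+1)(n-1)}$, which is maximized over $k \in [0..i-1]$ at $k = i-1$ and gives the uniform bound $\tfrac{n-i}{2(n-1)}$.

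To finish, I would observe that for $n \ge 2$ one has $n(1-1/n)^{i-1} \le n \le 2(n-1)$, so $\tfrac{n-i}{2(n-1)} \le \tfrac{n-i}{n(1-1/n)^{i-1}}$, which is exactly the bound stated in the lemma. The hard part, I expect, is the probability bookkeeping in the bijection step: setting up the mapping $\sigma \mapsto \sigma'$, using $\Pr[\sigma] = \Pr[\sigma']/(n-1)$, and crucially applying the event-specific bound $|Z_{\sigma'}| \ge i - k$ on the image side so that the factor $n - i$ (rather than the weaker $n - k$) appears on the right. Everything else is either a straightforward invocation of Lemma~\ref{lem:visitprob} or the trivial algebraic comparison at the end.
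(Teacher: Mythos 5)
Your proof is correct, and its core is genuinely different from the paper's. Both arguments begin identically: by Lemma~\ref{lem:visitprob} and the symmetry of \onemax, it suffices to bound $\max_{k \in [0..i-1]} p_{k,\ge i+1}/p_{k,\ge i}$ from above. The paper then reaches for binomial machinery: it bounds $p_{k,\ge \ell} \le \Pr[\Bin(n-k,\tfrac 1n) \ge \ell-k]$ via unimodality of the binomial distribution, applies a ratio-of-tails estimate from~\cite{Doerr20bookchapter}, and lower-bounds $p_{k,\ge i}$ by the point probability $p_{k,i}$, arriving at $\frac{n-i}{n(i-k)(1-1/n)^k}$, maximal at $k=i-1$. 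You replace all of this by a self-contained double-counting argument: each mutation $\sigma$ overshooting into $A_{\ge i+1}$ belongs to at least $i+1-k$ pairs $(\sigma,b)$, un-flipping the zero-bit $b$ multiplies the probability by exactly $n-1$, and each image $\sigma'$ has at most $(n-k)-|Z_{\sigma'}| \le n-i$ preimages --- the event-specific bound $|Z_{\sigma'}| \ge i-k$ being precisely what produces the factor $n-i$ rather than $n-k$. This yields $\frac{n-i}{(i+1-k)(n-1)}$, hence the uniform bound $\frac{n-i}{2(n-1)}$, which is \emph{stronger} than the paper's $\frac{n-i}{n(1-1/n)^{i-1}}$ (the paper's bound carries an extra factor of up to $(1-\tfrac 1n)^{-(i-1)} \le e$) and is in fact asymptotically tight, since for $k=i-1$ the two-zero-bit flip alone contributes $(1+o(1))\frac{n-i}{2n}$ to the skip probability. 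What the paper's weaker form buys is that its $(1-\tfrac 1n)^{-(i-1)}$ shape feeds directly into the geometric-series computation in the proof of Theorem~\ref{thm:onemax1}; your bound implies the stated one by the trivial comparison $n(1-\tfrac 1n)^{i-1} \le n \le 2(n-1)$ (valid for $i \ge 1$; the case $i=0$ is vacuous since no start point has fitness below $0$), so nothing downstream breaks. One cosmetic slip: the tool you need is Lemma~\ref{lem:visitprob} itself --- a lower bound on the visit probability \emph{is} an upper bound on $q_i$ --- not an ``upper-bound analog'' of it; since the ratio inequality you then set out to prove is exactly the right one, this is only a naming issue.
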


\begin{proof}
  Since we assume that we start below fitness level $i$, by Lemma~\ref{lem:visitprob} (and using the notation from that lemma for a moment) we have
	\begin{align*}
	q_i &\le \max\{\Pr[x \to A_{\ge i+1} \mid x \to A_{\ge i}] \mid \onemax(x) < i\}\\
	& \le \max_{k \in [0..i-1]} \frac{p_{k,\ge i+1}}{p_{k, \ge i}}.
	\end{align*}
	Hence it suffices to show that $\frac{p_{k,\ge i+1}}{p_{k, \ge i}} \le \frac{n-i}{n(1-\frac 1n)^{i-1}}$ for all $k \in [0..i-1]$, and this is what we will do in the remainder of this proof.
	
  Let us, slightly abusing the common notation, write $\Bin(m,p)$ to denote a random variable following a binomial law with parameters $m$ and~$p$. Let $k, \ell \in \N$ with $k \le \ell$. Noting that the only way to generate a search point in $A_\ell$ from some $x \in A_k$ is to flip, for some $j \in [\ell-k..\min\{n-k,\ell\}]$, exactly $j$ of the $n-k$ zero-bits of $x$ and exactly $j - (\ell-k)$ of the $k$ one-bits, we easily obtain the well-known fact that
	\begin{align*}
	p_{k,\ell} 
	&= \sum_{j = \ell-k}^{\min\{n-k,\ell\}} \Pr[\Bin(n-k,p) = j] \Pr[\Bin(k,p) = j - (\ell-k)]\\
	&= \sum_{j = \ell-k}^{\min\{n-k,\ell\}} \binom{n-k}{j} \binom{k}{j - (\ell-k)} p^{2j - \ell + k} (1-p)^{n - 2j + \ell - k}.
	\end{align*}
  Since $p = \frac 1n$, the mode of $\Bin(n-k,p)$ is at most $1$. Since the binomial distribution is unimodal, we conclude that $\Pr[\Bin(n-k,p) = j] \le \Pr[\Bin(n-k,p) = \ell-k]$ for all $j \ge \ell-k$. Consequently, the first line of the above set of equations gives 
	\begin{align*}
	p_{k,\ell} &\le \Pr[\Bin(n-k,p) = \ell-k] \Pr[\Bin(k,p) \in [0..\min\{n-\ell,k\}]] \\
	&\le \Pr[\Bin(n-k,p) = \ell-k]
	\end{align*}
	and thus	
	\begin{equation}
	p_{k, \ge\ell} \le \Pr[\Bin(n-k,p) \ge \ell-k].\label{eq:pbin}
	\end{equation}
  
	%
  We recall that our target is to estimate $\frac{p_{k,\ge i+1}}{p_{k, \ge i}}$ for all $k \in [0..i-1]$. By~\eqref{eq:pbin}, we have 
\begin{align*}
p_{k, \ge i+1} &\le \Pr[\Bin(n-k,p) \ge i+1 - k] \\
&\le \frac{(i+1-k)(1-p)}{i+1-k-(n-k)p} \Pr[\Bin(n-k,p) = i+1 - k],
\end{align*} 
where the last estimate is~\cite[equation following Lemma~1.10.38]{Doerr20bookchapter}. We also have $p_{k,\ge i} \ge p_{k,i} \ge (1-p)^k \Pr[\Bin(n-k,p)=i-k]$. Hence from 
\begin{align*}
\frac{\Pr[\Bin(n-k,p) = i+1 - k]}{\Pr[\Bin(n-k,p)=i-k]} 
&= \frac{\binom{n-k}{i+1-k} p^{i+1-k} (1-p)^{n-k-(i+1-k)}}{\binom{n-k}{i-k} p^{i-k} (1-p)^{n-k-(i-k)}} \\
&= \frac{(n-i)p}{(i+1-k)(1-p)}
\end{align*}
we conclude
\begin{align*}
  \frac{p_{k,\ge i+1}}{p_{k, \ge i}} &\le \frac{(i+1-k)(1-p)}{i+1-k-(n-k)p} \frac{(n-i)p}{(i+1-k)(1-p)^{k+1}}\\
  & \le \frac{n-i}{n(i-k)(1-\frac 1n)^k},
 \end{align*}
using again that $p = \frac 1n$. For $k \in [0..i-1]$, this expression is maximal for $k = i-1$, giving that $q_i \le \frac{n-i}{n(1-\frac 1n)^{i-1}}$ as claimed.
\end{proof}

With this estimate, we can now easily give a very tight lower bound on the run time of the \oea on \onemax. 

\begin{theorem}\label{thm:onemax1}
  Let $k , \ell \in [0..n]$ with $k < \ell$. Then the expected number $T_{k,\ell}$ of iterations the \oea optimizing \onemax and initialized with any search point $x$ with $\onemax(x) = k$ takes to generate a search point $z$ with fitness $\onemax(z) \ge \ell$ is at least 
	\[
	T_{k,\ell} \ge \tilde T_{k,\ell} - (\ell-k-1) e (e-1) \exp\left(\frac{k}{n-1}\right),
	\]  
	where $\tilde T_{k,\ell}$ is the upper bound stemming from the fitness level method as defined at the beginning of this section. This lower bound holds also for $T_{k',\ell}$ with $k' \le k$, that is, when starting with a search point $x$ with $\onemax(x) \le k$.
\end{theorem}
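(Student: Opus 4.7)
The plan is to invoke the fitness-level method with visit probabilities (Theorem~\ref{thm:fbp_visit_lower}) on the canonical fitness-based partition $A_i = \{x : \onemax(x) = i\}$, using Lemma~\ref{lmiss} to supply the visit probabilities. Starting at an individual of fitness exactly $k$, we have $v_k = 1$ trivially and, for each $i \in [k+1..\ell-1]$, the bound $v_i \ge 1 - q_i$ with $q_i \le \frac{n-i}{n(1-1/n)^{i-1}}$ provided by Lemma~\ref{lmiss}. Combined with the elementary lower bound $p_i \ge \frac{n-i}{n}(1-1/n)^{n-1}$ (flipping a single zero-bit and no other bit), Theorem~\ref{thm:fbp_visit_lower} immediately gives
$$
T_{k,\ell} \;\ge\; \sum_{i=k}^{\ell-1} \frac{v_i}{p_i} \;\ge\; \tilde T_{k,\ell} - \sum_{i=k+1}^{\ell-1} \frac{q_i}{p_i},
$$
and the remainder of the argument is to estimate the error sum on the right.

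Dividing the two bounds above yields $q_i/p_i \le (1-1/n)^{-(n+i-2)}$. Using the standard estimates $(1-1/n)^{n-1} \ge 1/e$ and $(1-1/n)^{-1} \le e^{1/(n-1)}$, both of which follow from $\ln(1-1/n) \ge -1/(n-1)$, this sharpens to
$$
\frac{q_i}{p_i} \;\le\; e \cdot e^{(i-1)/(n-1)}.
$$
Summing this geometric progression in $i$ produces
$$
\sum_{i=k+1}^{\ell-1} \frac{q_i}{p_i} \;\le\; e \cdot e^{k/(n-1)} \cdot \frac{e^{(\ell-k-1)/(n-1)} - 1}{e^{1/(n-1)} - 1}.
$$

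To bring this into the claimed form $(\ell-k-1)\,e(e-1)\,e^{k/(n-1)}$, bound the denominator $e^{1/(n-1)} - 1 \ge 1/(n-1)$ and reduce the remaining claim to the one-variable inequality $(e^x - 1)/x \le e-1$ evaluated at $x := (\ell-k-1)/(n-1)$. Since $0 \le k < \ell \le n$, indeed $x \in [0,1]$, and the inequality follows from the fact that $(e^x-1)/x$ is non-decreasing on $[0,\infty)$ with value $e-1$ at $x=1$ (derivative $((x-1)e^x + 1)/x^2$, whose numerator vanishes at $0$ and has derivative $xe^x \ge 0$). The extension of the lower bound to an arbitrary starting fitness $k' \le k$ is then a one-line consequence of the well-known stochastic monotonicity of the \oea on \onemax in the initial fitness, which gives $T_{k',\ell} \ge T_{k,\ell}$ in expectation. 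No step is truly hard here; the only mildly delicate piece is the calculus fact about $(e^x-1)/x$ on $[0,1]$, and everything else is routine arithmetic on top of Lemma~\ref{lmiss} and a crude bound on $p_i$.
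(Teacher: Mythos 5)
For the main claim (initialization at fitness exactly $k$) your argument is correct and is essentially the paper's own proof: the same invocation of Theorem~\ref{thm:fbp_visit_lower} (after merging the levels $\ge \ell$ into one top level, a point the paper makes explicit via the truncated process $\min\{\ell, X_t\}$ and you leave implicit), the same inputs $q_i \le \frac{n-i}{n(1-1/n)^{i-1}}$ from Lemma~\ref{lmiss} and $p_i \ge (1-\frac 1n)^{n-1}\frac{n-i}{n}$ from~\eqref{eq:lbpi}, and the same evaluation of the error sum $\sum_{i=k+1}^{\ell-1} q_i/p_i$. Your only cosmetic deviations are that you bound each term by $e\cdot e^{(i-1)/(n-1)}$ before summing the geometric series, whereas the paper first sums the series with ratio $1+\frac{1}{n-1}$ exactly and only then applies $1+r\le e^r$, and that you prove $e^x-1\le (e-1)x$ on $[0,1]$ via monotonicity of $(e^x-1)/x$ where the paper cites convexity of the exponential; these are the same inequality, and both derivations are sound.

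The one genuinely different step is the extension to starting fitness $k'\le k$. The paper stays within the fitness-level framework: it lower-bounds $E[T_{k',\ell}]$ by $\sum_{i=k'}^{\ell-1} v_i/p_i \ge \sum_{i=k}^{\ell-1} v_i/p_i$ and then proceeds exactly as in the main case --- a step that silently uses $v_k=1$, which is no longer literally true when $k'<k$ (level $k$ can be skipped from below), so the paper's own treatment of this case is somewhat loose. You instead reduce to the case $k'=k$ via stochastic monotonicity of the \oea on \onemax in the starting fitness, i.e., $T_{k',\ell}\ge T_{k,\ell}$. This is a clean and valid alternative, and it sidesteps the $v_k$ issue entirely; but do not present it as a one-line triviality: coupling the two runs by using identical mutation masks does not preserve the fitness ordering pointwise (on positions where the fitter parent has a one and the other a zero, a flip helps one offspring and hurts the other), so the monotonicity is a genuine, if standard, domination result that should be cited or proved, for instance along the lines of the domination arguments in~\cite{Doerr19tcs} or the initialization comparison in~\cite{DoerrD16}. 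With such a reference in place, your proof is complete.
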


\begin{proof}
We use our main result Theorem~\ref{thm:fbp_visit_lower}. We note first that when assuming that the level process regarded in Theorem~\ref{thm:fbp_visit_lower} starts on level~$k'$, then the expected time for it to reach level $\ell$ or higher is at least $\sum_{i=k'}^{\ell-1} \frac{v_i}{p_i}$. This follows immediately from the proof of the theorem or by applying the theorem to the level process $(X_t')$ defined by $X'_t = \min\{\ell, X_t\} - k'$ for all~$t$. 

Consider now a run of the \oea on the \onemax function started with an initial search point $x_0$ such that $k' = \onemax(x_0) \le k$. Denote by $x_t$ the individual selected in iteration $t$ as future parent. Then $X_t = \onemax(x_t)$ defines a level process. As before, we denote the probabilities to visit level $i$ by $v_i$, to not visit it by $q_i = 1 - v_i$, and to leave it to a higher level by $p_i$. Using our main result and the elementary argument above, we obtain an expected run time of
\begin{align*}
  E[T_{k',\ell}] 
	\ge \sum_{i = k'}^{\ell-1} \frac{v_i}{p_{i}} 
  \ge \sum_{i = k}^{\ell-1} \frac{v_i}{p_{i}} 
  \ge \sum_{i = k}^{\ell-1} \frac{1}{p_{i}} - \sum_{i=k+1}^{\ell-1} \frac{q_i}{p_{i}}.
\end{align*}  
We note that the first expression is exactly the upper bound $\tilde T_{k,\ell}$ stemming from the classic fitness level method. We estimate the second expression. We have 
\begin{equation}
  p_i = p_{i,\ge i+1} \ge p_{i,i+1} \ge (1 - \tfrac 1n)^{n-1} \tfrac{n-i}{n}, \label{eq:lbpi}
\end{equation}
where the last estimate stems from regarding only the event that exactly one missing bit is flipped. Together with the estimate $q_i \le \frac{n-i}{n(1-\frac 1n)^{i-1}}$ from Lemma~\ref{lmiss}, we compute
\begin{align}
  \sum_{i=k+1}^{\ell-1}  \frac{q_i}{p_{i}} 
  &\le \sum_{i=k+1}^{\ell-1} \frac{n-i}{n(1-\frac 1n)^{i-1}} \frac{n}{(n-i) (1 - \frac 1n)^{n-1}} \nonumber\\
  &= \sum_{i=k+1}^{\ell-1} \left(1 + \frac 1 {n-1}\right)^{n+i-2} \nonumber\\
	&= \left(1 + \frac 1 {n-1}\right)^{n+k-1} \,\,  \sum_{j=0}^{\ell-k-2} \left(1 + \frac 1 {n-1}\right)^j\nonumber\\
	&= \left(1 + \frac 1 {n-1}\right)^{n+k-1} \frac{\left(1 + \frac 1 {n-1}\right)^{\ell-k-1} - 1}{\left(1 + \frac 1 {n-1}\right) - 1}\nonumber\\
	&= \left(1 + \frac 1 {n-1}\right)^{n+k-1} (n-1) \left(\left(1 + \frac 1 {n-1}\right)^{\ell-k-1} - 1\right)\nonumber\\
	&\le  (n-1) \exp\left(\frac{n+k-1}{n-1}\right) \left(\exp\left(\frac{\ell-k-1}{n-1}\right) - 1\right)\label{eq:onemax1sharp}\\
	& = (n-1) e \exp\left(\frac{k}{n-1}\right) \left(\exp\left(\frac{\ell-k-1}{n-1}\right) - 1\right)\nonumber\\
	&\le  (n-1) (e-1) e \exp\left(\frac{k}{n-1}\right) \frac{\ell-k-1}{n-1},	\nonumber
\end{align}
where the estimate in \eqref{eq:onemax1sharp} uses the well-known inequality $1+r \le e^r$ valid for all $r \in \R$ and the last estimate exploits the convexity of the exponential function in the interval $[0,1]$, that is, that $\exp(\alpha) \le 1 + \alpha(\exp(1)-\exp(0))$ for all $\alpha \in [0,1]$.
\end{proof}

The result above shows that the classic fitness level method and our new lower bound method can give very tight run time results. We note that the difference $\delta_{k,\ell} = (\ell-k-1) e (e-1) \exp\left(\frac{k}{n-1}\right)$ between the two fitness level estimates is only of order $O(\ell - k)$, in particular, only of order $O(n)$ for the classic run time $T_{\rand,n}$, which itself is of order $\Theta(n \log n)$. Hence here the gap is only a term of lower order. 


\subsection{Estimating the Fitness Level Estimate $\tilde T_{k,\ell}$}

To make our results above meaningful, it remains to analyze the quantity $\tilde T_{k,\ell}= \sum_{i=k}^{\ell-1} 1/p_{i}$, which is the estimate from the classic fitness level method. 

Here, again, it turns out that upper bounds tend to be easier to obtain since they require a lower bound for the $p_{i}$, for which the estimate $p_{i} \ge (1-\tfrac 1n)^{n-1} \frac{n-i}{n}$ from~\eqref{eq:lbpi} usually is sufficient. To ease the presentation, let us use the notation $e_n = (1 - \tfrac 1n)^{-(n-1)}$ and note that $e (1-\frac 1n) \le e_n \le e$, see, e.g., \cite[Corollary 1.4.6]{Doerr20bookchapter}. With this notation, the lower bound~\eqref{eq:lbpi} gives the upper bound 
\begin{equation}
\tilde T_{k,\ell} \le e_n n \sum_{i = k}^{\ell-1} \frac{1}{n-i} =: \tilde T_{k,\ell}^+.\label{eq:levub}
\end{equation}

To prove a lower bound, we observe that 
\[
p_{i} = \sum_{d = 1}^{n-i} \Pr[\Bin(n-i,p) = d] \Pr[\Bin(i,p) < d].
\] 
We can thus estimate 
\begin{align}
p_{i} &\le \Pr[\Bin(n-i,p) = 1] \Pr[\Bin(i,p) =0] + \Pr[\Bin(n-i,p) \ge 2] \nonumber\\
&\le \left(1-\frac 1n\right)^{n-1} \frac{n-i}{n} + \frac{(n-i)(n-i-1)}{2n^2}, \label{eq:pub}
\end{align}
where the last inequality follows from the estimate $\Pr[\Bin(n,p) \ge k] \le \binom{n}{k} p^k$, see, e.g.,~\cite[Lemma~3]{GiessenW17} or \cite[Lemma~1.10.37]{Doerr20bookchapter}. We note that the first summand in~\eqref{eq:pub} is exactly our lower bound~\eqref{eq:lbpi} for $p_{i}$, so it is the second term that determines the slack of our estimates. We estimate coarsely
\begin{align*}
\frac 1 {p_{i}} 
& \ge \left(\left(1-\frac 1n\right)^{n-1} \frac{n-i}{n} + \frac{(n-i)(n-i-1)}{2n^2}\right)^{-1} \\
& = \frac{2 e_n n^2}{2n(n-i) + e_n(n-i)(n-i-1)}\\
& = \frac{e_n n}{n-i} - \frac{e_n^2 n (n-i-1)}{(n-i)(2n + e_n(n-i-1))} \ge \frac{e_n n}{n-i} - \frac 12 e_n^2. 
\end{align*}

Summing over the fitness levels, we obtain
\begin{align}
  \tilde T_{k,\ell} 
	& = \sum_{i=k}^{\ell-1} \frac 1 {p_{i}}\nonumber\\
	& \ge \sum_{i=k}^{\ell-1} \left(\frac{e_n n}{n-i} -  \frac 12 e_n^2 \right)\nonumber\\
	&= \tilde T_{k,\ell}^+ - \tfrac 12 e_n^2 (\ell-k)  =: \tilde T_{k,\ell}^-.\label{eq:tildetlb}
\end{align}
	
	We note that our upper and lower bounds on $\tilde T_{k,\ell}$ deviate only by $\tilde T_{k,\ell}^+ - \tilde T_{k,\ell}^- = \frac 12 e_n^2 (\ell-k)$. Together with Theorem~\ref{thm:onemax1}, we have proven the following estimates for $T_{k,\ell}$, which are tight apart from a term of order $O(\ell-k)$.
	\begin{theorem}\label{thm:onemax2}
	  The expected number of iterations the \oea optimizing \onemax, started with a search point of fitness $k$, takes to find a search point with fitness $\ell$ or larger, satisfies
		\begin{align*}
		&	e_n n \sum_{i = n-\ell+1}^{n-k} \frac{1}{i} \,-\,  (\ell-k-1) e (e-1) \exp\left(\frac{k}{n-1}\right) - \frac 12 e_n^2 (\ell-k) \\
		& \le T_{k,\ell} \le \\
		&e_n n \sum_{i = n-\ell+1}^{n-k} \frac{1}{i}\,,
		\end{align*}
		where $e_n := (1-\frac 1n)^{-(n-1)}$.
	\end{theorem}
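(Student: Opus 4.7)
The plan is to assemble the theorem directly from pieces that are already in place: Theorem~\ref{thm:onemax1} for the gap between $T_{k,\ell}$ and the classical fitness-level estimate $\tilde T_{k,\ell}$, together with the two-sided estimates of $\tilde T_{k,\ell}$ developed in Section~\ref{sec:onemax}, namely $\tilde T_{k,\ell} \le \tilde T_{k,\ell}^+$ from \eqref{eq:levub} and $\tilde T_{k,\ell} \ge \tilde T_{k,\ell}^-$ from \eqref{eq:tildetlb}. Since $\tilde T_{k,\ell}^+ - \tilde T_{k,\ell}^- = \tfrac12 e_n^2(\ell-k)$, the total slack in the lower bound is controlled by combining this with the Theorem~\ref{thm:onemax1} deficit $(\ell-k-1)e(e-1)\exp(k/(n-1))$.

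First I would handle the upper bound. The classical fitness level method (Theorem~\ref{thm:classic}) applied to the fitness process of the \oea on \onemax starting in level $k$ gives $T_{k,\ell} \le \tilde T_{k,\ell}$. Plugging in the lower bound \eqref{eq:lbpi} on $p_i$ yields $\tilde T_{k,\ell} \le e_n n \sum_{i=k}^{\ell-1} 1/(n-i)$, and the substitution $j = n-i$ rewrites this sum as $\sum_{j=n-\ell+1}^{n-k} 1/j$, which is exactly the claimed upper bound.

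For the lower bound I would chain Theorem~\ref{thm:onemax1} with \eqref{eq:tildetlb}:
\[
T_{k,\ell} \ge \tilde T_{k,\ell} - (\ell-k-1)e(e-1)\exp\!\bigl(\tfrac{k}{n-1}\bigr) \ge \tilde T_{k,\ell}^+ - \tfrac12 e_n^2(\ell-k) - (\ell-k-1)e(e-1)\exp\!\bigl(\tfrac{k}{n-1}\bigr),
\]
and then re-index the sum in $\tilde T_{k,\ell}^+$ exactly as in the upper-bound step to recognise it as $e_n n \sum_{j=n-\ell+1}^{n-k} 1/j$.

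There is essentially no hard step here: every ingredient is available from the preceding subsections, and the proof is a bookkeeping exercise. The only thing worth being careful about is the index substitution $j = n-i$ (making sure the summation bounds $n-\ell+1$ and $n-k$ come out right) and keeping straight which of the three error terms feeds into which direction of the inequality. Once these are lined up, the two displayed bounds in the theorem are immediate.
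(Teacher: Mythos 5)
Your proposal is correct and matches the paper's own derivation exactly: the upper bound is $T_{k,\ell} \le \tilde T_{k,\ell} \le \tilde T_{k,\ell}^+$ via Theorem~\ref{thm:classic} and \eqref{eq:levub}, and the lower bound chains Theorem~\ref{thm:onemax1} with \eqref{eq:tildetlb}, followed by the re-indexing $j = n-i$ turning $\sum_{i=k}^{\ell-1}\frac{1}{n-i}$ into $\sum_{j=n-\ell+1}^{n-k}\frac{1}{j}$. The paper indeed treats this theorem as exactly the bookkeeping combination of these established pieces, so nothing is missing.
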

	We recall from above that $e (1-\frac 1n) \le e_n \le e$. We add that for $\ell<n$, the sum $\sum_{i = n-\ell+1}^{n-k} \frac{1}{i}$ is well-approximated by $\ln(\frac{n-k}{n-\ell})$, e.g., $\ln(\frac{n-k}{n-\ell}) -1 < \sum_{i = n-\ell+1}^{n-k} \frac{1}{i} < \ln(\frac{n-k}{n-\ell})$ or $\sum_{i = n-\ell+1}^{n-k} \frac{1}{i} = \ln(\frac{n-k}{n-\ell}) - O(\frac{1}{n-\ell})$, see, e.g.,~\cite[Section 1.4.2]{Doerr20bookchapter}. For $\ell = n$, we have $\ln(n-k) < \sum_{i = n-\ell+1}^{n-k} \frac{1}{i} \le \ln(n-k)+1$ and $\sum_{i = n-\ell+1}^{n-k} \frac{1}{i} = \ln(n-k)+O(\frac{1}{n-k})$.
	
	When starting the \oea with a random initial search point, the following bounds apply.
	
\begin{theorem}
  There is an absolute constant $K$ such that the expected run time $T = T_{\rand,n}$ of the \oea with random initialization on \onemax satisfies
	\[
e_n n \sum_{i = 1}^{\lceil n/2 \rceil} \frac{1}{i} - 4.755 n - K \le T \le e_n n \sum_{i = 1}^{\lceil n/2 \rceil} \frac{1}{i} + K.
	\]
	In particular, 
	\[
	en \ln(n) - 4.871n - O(\log n) \le T \le e n \ln(n) - 0.115 n + O(1).
	\]
\end{theorem}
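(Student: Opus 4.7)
The plan is straightforward: apply the pointwise upper bound $T_{k,n} \le e_n n \sum_{i=1}^{n-k} 1/i$ from~\eqref{eq:levub} conditional on the initial Hamming weight $X_0 = k$, then take expectation over $X_0 \sim \Bin(n, 1/2)$. By the symmetry of the binomial at $p = 1/2$, $n - X_0$ has the same distribution as $X_0$, so $E[T] \le e_n n \cdot E\bigl[\sum_{i=1}^{X_0} 1/i\bigr]$. The partial harmonic sum $m \mapsto \sum_{i=1}^m 1/i$ has decreasing first differences $1/(m+1)$, so its piecewise-linear extension is concave, and Jensen's inequality yields $E\bigl[\sum_{i=1}^{X_0} 1/i\bigr] \le \sum_{i=1}^{\lceil n/2 \rceil} 1/i$. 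The upper bound follows with $K = 0$.

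\textbf{Lower bound.} For the lower bound I would apply the lower bound of Theorem~\ref{thm:onemax1} pointwise in $X_0$, but using the sharper form available \emph{before} the convexity simplification at~\eqref{eq:onemax1sharp} in its proof. Exploiting the identity $k/(n-1) + (n-k-1)/(n-1) = 1$, this sharper form reads
\[
T_{k,n} \ge e_n n \sum_{i=1}^{n-k} \frac{1}{i} - (n-1)\bigl(e^2 - e \exp(k/(n-1))\bigr) - \frac{1}{2} e_n^2 (n-k).
\]
Taking expectation over $X_0$ then requires three estimates: (i) $E\bigl[\sum_{i=1}^{n-X_0} 1/i\bigr] \ge \sum_{i=1}^{\lceil n/2 \rceil} 1/i - O(1/n)$, from a local Taylor expansion of the harmonic sum around $n/2$ combined with standard Chernoff tail bounds on $|X_0 - n/2|$; (ii) $E[\exp(X_0/(n-1))] = \bigl((1 + e^{1/(n-1)})/2\bigr)^n = e^{1/2} + O(1/n)$, by evaluating the binomial moment-generating function at $t = 1/(n-1)$ and expanding in $1/n$; and (iii) $E[n - X_0] = n/2$. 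Collecting these, the total expected loss telescopes to $(n-1)(e^2 - e^{3/2}) + e_n^2 n/4 + O(1) = \bigl(5 e^2/4 - e^{3/2}\bigr)\, n + O(1) < 4.755\, n + K$ for an appropriate absolute constant $K$.

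\textbf{Asymptotic form, and the main obstacle.} The second pair of bounds is then a routine consequence: using $e_n = e(1 - 1/(2n) + O(1/n^2))$ and $\sum_{i=1}^{\lceil n/2 \rceil} 1/i = \ln n - \ln 2 + \gamma + O(1/n)$, one expands $e_n n \sum_{i=1}^{\lceil n/2 \rceil} 1/i$ into $e n \ln n$ plus a linear correction of magnitude $-e n (\ln 2 - \gamma)$ plus a logarithmic remainder, and combines with the $\pm 4.755\, n \pm K$ already obtained. The real technical obstacle lies in the lower bound: one must \emph{avoid} the convexity simplification $e^\alpha - 1 \le \alpha(e-1)$ applied at~\eqref{eq:onemax1sharp}, because that inequality is essentially tight only for small $\alpha$, whereas here the relevant $\alpha = (n-k-1)/(n-1) \approx 1/2$ is of constant order; applying it would inflate the loss coefficient from $5 e^2/4 - e^{3/2} \approx 4.755$ to roughly $5.70$. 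Keeping the pre-simplification expression and exploiting the product identity $\exp(k/(n-1)) \cdot \exp((n-k-1)/(n-1)) = e$ inside the expectation is the key step that produces the sharp constant.
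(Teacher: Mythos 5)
Your proof of the first display is correct, but it follows a genuinely different route from the paper's. The paper never averages over the initial fitness level: it invokes \cite[Theorem~2]{DoerrD16}, by which random initialization changes the expected run time by only an additive constant compared to starting deterministically on level $M = \lfloor n/2 \rfloor$, and then evaluates the fixed-start bounds (Theorem~\ref{thm:onemax2} for the upper bound, \eqref{eq:onemax1sharp} together with \eqref{eq:tildetlb} for the lower bound) at the single value $k = M$, where $\exp(M/(n-1)) \approx e^{1/2}$. You instead condition on the initial level and average the pointwise bounds over $X_0 \sim \Bin(n,\tfrac 12)$: Jensen's inequality for the concave partial harmonic sums gives the upper bound, and the binomial moment generating function gives the lower bound, with $E[\exp(X_0/(n-1))] = e^{1/2} + O(\tfrac 1n)$ playing exactly the role of the deterministic $\exp(M/(n-1))$ in the paper. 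Both routes produce the same loss $(\tfrac 54 e^2 - e^{3/2})n + O(1) \le 4.755\,n + K$, and your observation that one must use the pre-convexity estimate \eqref{eq:onemax1sharp} rather than the final statement of Theorem~\ref{thm:onemax1} (which would give roughly $5.7\,n$) is precisely the paper's own remark that \eqref{eq:onemax1sharp} ``is slightly tighter than the result stated in the theorem itself''. Your route buys self-containedness (no appeal to~\cite{DoerrD16}, and $K=0$ on the upper side) at the price of the Taylor/Chernoff/MGF estimates (i)--(iii), which are all correct; minor bookkeeping such as the exceptional value $X_0 = n$, where Theorem~\ref{thm:onemax1} does not apply, is easily absorbed into $K$.

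Your final paragraph, however, claims the ``in particular'' display is routine, and here there is a genuine problem --- one that your own (correct) expansion exposes. Since $e_n n \sum_{i=1}^{\lceil n/2\rceil} \tfrac 1i = en\ln n - e(\ln 2 - \gamma) n - O(\log n)$ and $e(\ln 2 - \gamma) \approx 0.315$, combining with the first display yields only $T \ge en\ln n - (0.315 + 4.755)n - O(\log n) = en\ln n - 5.07\,n - O(\log n)$, which is weaker than the stated $en\ln n - 4.871\,n - O(\log n)$; no routine manipulation recovers the constant $4.871$ from the first display. The mismatch originates in the paper itself: its computation effectively uses $\ln 2 - \gamma \approx 0.116$ where $e(\ln 2 - \gamma) \approx 0.315$ is required (note $4.755 + 0.116 = 4.871$), so the constant $4.871$ is not justified by the paper's own argument either --- the claimed inequality is still a true statement about $T$ (it follows from the much deeper result of~\cite{HwangPRTC18}), but not a consequence of these fitness-level estimates. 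On the upper side the same slip is harmless, since the expansion actually gives $-0.315\,n$, which is stronger than the claimed $-0.115\,n$. So you should either state the lower bound you can actually prove, namely $en\ln n - 5.07\,n - O(\log n)$, or explicitly flag the discrepancy; asserting the stated constant as ``routine'' is not correct.
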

  
\begin{proof}
  By~\cite[Theorem~2]{DoerrD16}, the expected run time of the \oea with random initialization on \onemax differs from the run time when starting with a search point on level $A_M$, $M \coloneqq \lfloor n/2 \rfloor$, by at most a constant. Hence we have $T \le T_{M,n} + O(1) \le \tilde T^+_{M,n} + O(1) = e_n n \sum_{i = 1}^{\lceil n/2 \rceil} \frac{1}{i} + O(1)$ by Theorem~\ref{thm:onemax2}. 
	
	For the lower bound, we use Equation~\eqref{eq:onemax1sharp} in the proof of Theorem~\ref{thm:onemax1}, which is slightly tighter than the result stated in the theorem itself. Together with~\eqref{eq:tildetlb}, we estimate 
	\begin{align*}
	T &\ge T_{M,n} -O(1)\\ 
	&\ge \tilde T_{M,n} -  (n-1) \exp(\tfrac{n + M - 1}{n-1}) (\exp(\tfrac{n - M - 1}{n-1}) - 1) - O(1) \\
	&\ge \tilde T_{M,n}^+ - \tfrac 12 e_n^2 (n - M) - n e^{1.5} (e^{0.5} - 1) - O(1)\\
	&= \tilde T_{M,n}^+ - \tfrac 14 e^2 n - n  (e^2 - e^{1.5}) - O(1) \\
	&= \tilde T_{M,n}^+ - n  (\tfrac 54 e^2 - e^{1.5}) - O(1) \ge \tilde T_{M,n}^+ - 4.755 n - O(1).
	\end{align*}
	The second set of estimates stems from noting that $\tilde T_{M,n}^+ = e_n n \sum_{i = 1}^{\lceil n/2 \rceil} \frac{1}{i} = e_n n (\ln(\lceil n/2 \rceil) + \gamma \pm O(\frac 1n)) = e (1 - O(\frac 1n)) n (\ln n - \ln 2  + \gamma \pm O(\frac 1n))$, where $\gamma =  0.5772156649\dots$ is the Euler-Mascheroni constant.
\end{proof}	

Let us comment a little on the tightness of our result. Due to the symmetries in the \onemax process, the probability to leave the $i$-th fitness level is independent of the particular search point $x \in A_i$ the current parent is equal to. Consequently, in principle, Theorems~\ref{thm:fbp_visit_upper} and~\ref{thm:fbp_visit_lower} give the exact bound \[E[T] = \sum_{k = 0}^{n-1} 2^{-n} \binom{n}{k} \sum_{i = k}^{n-1} \frac{v_{i|k}}{p_i},\]
where $v_{i|k}$ denotes the probability that the process started on level $k$ visits level $i$.

The reason why we cannot avoid a gap of order $\Theta(n)$ in our bounds is that computing the $v_{i|k}$ and $p_i$ precisely is very difficult. Let us regard the $v_{i|k}$ first. It is easy to see that states $i$ with $k < i \le (1-\eps) n$, $\eps$ a positive constant, have a positive chance of not being visited: By Lemma~\ref{lmiss}, with probability $\Omega(1)$ level $i-1$ is visited and from there, again with probability $\Omega(1)$, a two-bit flip occurs that leads to level $i+1$. Since with constant probability the last level visited below level $i$ is not $i-1$, and since skipping level $i$ conditional on the last level below $i$ being at most $i-2$ is, by a positive constant, less likely that skipping level $i$ when on level $i-1$ before (that is, $\frac{p_{i-2,\ge i+1}}{p_{i-2,\ge i}} \le \frac{p_{i-1,\ge i+1}}{p_{i-1,\ge i}} - \Omega(1)$, we omit a formal proof of this statement), our estimate $q_{i|k} \le \max_{j \in [k..i-1]} \frac{p_{j,\ge i+1}}{p_{j,\ge i}}$ already leads to a constant factor loss in the estimate of the $q_i$, which translates into a $\Theta(n)$ contribution to the gap of our lower bound from the truth. To overcome this, one would need to compute $q_{i|k} = \sum_{j = k}^{i-1} Q_{j|k} \frac{p_{j,\ge i+1}}{p_{j,\ge i}}$ precisely, where $Q_{j|k}$ is the probability that level $j$ is the highest level visited below $i$ in a process started on level~$k$. This appears very complicated. 

The second contribution to our $\Theta(n)$ gap is the estimate of $p_i$. We need a lower bound on $p_i$ both in the estimate of the run time advantage due to not visiting all levels (see Equation~\eqref{eq:onemax1sharp}) and in the estimate of the run time estimate stemming from the fitness level method~\eqref{eq:levub}. Since the $q_i$ are $\Omega(1)$ when $i \le (1-\eps)n$, a constant-factor misestimation of the $p_i$ leads to a $\Theta(n)$ contribution to the gap. Unfortunately, it is hard to avoid a constant-factor misestimation of the $p_i$, $i \le (1-\eps)n$. Our estimate $p_i \ge (1-\frac 1n)^{n-1} \frac{n-i}{n}$ only regards the event that the $i$-th level is left (to level $i+1$) by flipping exactly one zero-bit into a one-bit. However, for each constant $j$ the event that level $i+1$ is reached by flipping $j+1$ zero-bits and $j$ one-bits has a constant probability of appearing. Moreover, for each constant $j$ the event that level $i$ is left to level $i+j$ also has a constant probability. For these reasons, a precise estimate of the $p_i$ appears rather tedious. 

In summary, we feel that our method quite easily gave a run time estimate precise apart from terms of order $O(n)$, but for more precise results drift analysis~\cite{Lengler20bookchapter} might be the better tool (though still the relatively precise estimate of the expected progress from a level $i \le (1-\eps)n$, which will necessarily be required for such an analysis, will be difficult to obtain).

\subsection{Comparison with the Literature}\label{ssec:onemaxlit}

We end this section by giving an overview on the previous works analyzing the run time of the \oea on \onemax and comparing them to our result. Some of the results described in the following, in particular, Sudholt's lower bound~\cite{Sudholt13}, were also proven for general mutation rates $p$ instead of only $p = \frac 1n$. To ease the comparison with our result, we only state the results for the case that $p = \frac 1n$. We note that with our method we could also have analysed broader ranges of mutation rates. The resulting computations, however, would have been more complicated and would have obscured the basic application of our method.

To the best of our knowledge, the first to state and rigorously prove a run time bound for this problem was Rudolph in his dissertation~\cite[p.~95]{Rudolph97}, who showed that $T = T_{\rand,n}$ satisfies $E[T] \le (1-\frac 1n)^{n-1} n \sum_{i=1}^{n} \frac{1}{i}$, which is exactly the upper bound $\tilde T^+_{0,n}$ from the fitness level method and from only regarding the events that levels are left via one-bit flips. A lower bound of $n \ln(n) - O(n \log\log n)$ was shown in~\cite{DrosteJW98ecj} for the optimization of a general separable function with positive weights when starting in the search point $(0, \dots, 0)$. From the proof of this result, it is clear that it holds for any pseudo-Boolean function with unique global optimum $(1, \dots, 1)$. This lower bound builds on the argument that each bit needs to be flipped at least once in some mutation step. It is not difficult to see that the expected time until this event happens is indeed $(1 \pm o(1)) n \ln n$, so this argument is too weak to make the leading constant of $E[T]$ precise. 

Only a very short time after these results and thus quite early in the young history of run time analysis of evolutionary algorithms, Garnier, Kallel, and Schoenauer~\cite{GarnierKS99} showed that $E[T] = en\ln(n) + c_1 n + o(n)$ for a constant $c_1 \approx -1.9$, however, the completeness of their proof has been doubted in~\cite{HwangPRTC18}. Since at that early time precise run time analyses were not very popular, it took a while until Doerr, Fouz, and Witt~\cite{DoerrFW10} revisited this problem and showed with $E[T] \ge (1-o(1)) e n \ln(n)$ the first lower bound that made the leading constant precise. Their proof used a variant of additive drift from~\cite{Jagerskupper07} together with the potential function $\ln(Z_t)$, where $Z_t$ denotes the number of zeroes in the parent individual at time $t$. Shortly later, Sudholt~\cite{Sudholt10} (journal version~\cite{Sudholt13}) used his fitness level method for lower bounds to show $E[T] \ge en \ln(n) - 2n\log\log n - 16n$. That the run time was $E[T] = en\ln(n) - \Theta(n)$ was proven first in~\cite{DoerrFW11}, where an upper bound of $en\ln(n) - 0.1369n + O(1)$\footnote{The constant $0.1369$ was wrongly stated as $0.369$ as pointed out in~\cite{LehreW14}} was shown via variable drift for upper bounds~\cite{MitavskiyRC09,Johannsen10} and a lower bound of $E[T] \ge en\ln(n) - O(n)$ was shown via a new variable drift theorem for lower bounds on hitting times. An explicit version of the lower bound of $en\ln(n) - 7.81791n - O(\log n)$ and an alternative proof of the upper bound $en\ln(n) - 0.1369n + O(1)$ was given in~\cite{LehreW14} via a very general drift theorem.

The final answer to this problem was given in an incredibly difficult work by Hwang, Panholzer, Rolin, Tsai, and Chen~\cite{HwangPRTC18} (see~\cite{HwangW19} for a simplified version), who showed $E[T] = en\ln(n) + c_1 n + \frac 12 e \ln(n) + c_2 + O(n^{-1} \log n)$ with explicit constants $c_1 \approx -1.9$ and $c_2 \approx 0.6$. 

In the light of these results, we feel that our proof of an $en\ln(n) \pm O(n)$ bound is the first simple proof a run time estimate of this precision for this problem. Interestingly, our explicit lower bound $en\ln(n) - 4.871n - O(\log n)$ is even a little stronger than the bound $en\ln(n) - 7.81791n - O(\log n)$ proven with drift methods in~\cite{LehreW14}.

\section{Jump Functions}\label{sec:jump}

In this section, we regard jump functions, which comprise the most intensively studied benchmark in the theory of randomized search heuristics that is not unimodal and which has greatly aided our understanding how different heuristics cope with local optima~\cite{DrosteJW02,JansenW02,Lehre10,DoerrLMN17,CorusOY17,CorusOY18fast,DangFKKLOSS16,DangFKKLOSS18,WhitleyVHM18,LissovoiOW19,RoweA19,Doerr20gecco,AntipovDK20,AntipovD20ppsn,AntipovBD21gecco,RajabiW20,RajabiW21evocop,RajabiW21gecco,DoerrZ21aaai,BenbakiBD21}.

For all representation lengths $n$ and all $k \in [1..n]$, the \emph{jump function} with jump size $k$ is defined by 
\[
\jump_{n,k}(x)=\left\{\begin{array}{ll}
\|x\|_{1}+k & \text { if }\|x\|_{1} \in[0 . . n-k] \cup\{n\}, \\
n-\|x\|_{1} & \text { if }\|x\|_{1} \in[n-k+1 . . n-1],
\end{array}\right.
\]
for all $x \in \{0,1\}^n$. Jump functions have a fitness landscape isomorphic to $\onemax$, except on the fitness valley or gap 
\[
G_{n,k}:=\left\{x \in\{0,1\}^{n} \mid n-k<\|x\|_{1}<n\right\},
\]
where the fitness is low and deceptive (pointing away from the optimum).

For simple elitist heuristics, not surprisingly, the time to find the optimum is strongly related to the time to cross the valley of low fitness. For the \oea with mutation rate $\frac 1n$, the probability to generate the optimum from a search point on the local optimum $L = \{x \in \{0,1\}^n \mid \|x\|_1 = n-k\}$ is $p_k = (1-\frac 1n)^{n-k} n^{-k}$, and hence the expected time to cross the valley of low fitness is $\frac 1 {p_k}$. 

The true expected run time deviates slightly from this value, both because some time is spent to reach the local optimum and because the algorithm may be lucky and not need to cross the valley or not in its full width. The first aspect, making additive terms of order at most $O(n \log n)$ more precise, can be treated with arguments very similar to the ones of the previous section, so we do not discuss this here. More interestingly appears the second aspect. In particular for larger values of $k$, the algorithm has a decent chance to start in the fitness valley. It is clear that even when starting in the valley, the deceptive nature of the valley will lead the algorithm rather towards the local optimum. We show now how our argumentation via omitted fitness levels allows to prove very precise bounds with  elementary arguments. In principle, we could also use the our fitness level theorem, but since we shall regard only the single level $N_{n,k} = \{x \in \{0,1\}^n \mid \|x\|_1 \in [0..n-k]\}$, we shall not make this explicit and simply use the classic typical-run argument (that except with some probability $q$, a state is reached from which the expected run time is at least some $t$, and that this gives a lower bound of $(1-q)t$ for the expected run time). 

The two previous analyses of the run time of the \oea on jump functions deal with the problem of starting in the valley in a different manner. In~\cite{DrosteJW02}, it is argued that with probability at least $\frac 12$, the initial search point has at most $\frac n2$ ones. In case the initial search point is nevertheless in the gap region (because $k > \frac n2$), then with high probability a \onemax-style optimization process will reach the local optimum with high probability in time $O(n^2)$ except when in this period the optimum is generated. Since all parent individuals in this period have Hamming distance at least $\frac n2$ from the optimum, the probability for this exceptional event is exponentially small. This argument proves an $\Omega(\frac 1 {p_k})$ bound for the expected run time, and this for all values of $k \ge 2$.  In~\cite{DoerrLMN17}, only the case $k \le \frac n2$ was regarded and it was exploited that in this case, the probability for the initial search point to be in the gap (or the optimum) is only $2^{-n} \binom{n}{\le k-1}$. This gives a lower bound of $\big(1 - 2^{-n} \binom{n}{\le k-1}\big) \frac 1 {p_k}$, which is tight including the leading constant for $k \in [2..\frac n2 - \omega(\sqrt n)]$. 

We now show that estimating the probability of never reaching a search point $x$ with $\|x\|_1 \le n-k$ is not difficult with arguments similar to the ones used in the previous section. We need a slightly different approach since now the probability to skip a fitness level is not maximal when closest to this fitness level (the probability to skip $N_{n,k}$ is maximal when in the lowest fitness level, which is in Hamming distance $k-1$ from $N_{n,k}$). Interestingly, we obtain very tight bounds which could be of some general interest, namely that the probability to never reach a point $x$ with $\|x\| \le n-k$ is $O(\frac 1n)$, when allowing an arbitrary initialization (different from the global optimum), and is only $O(2^{-n})$ when using the usual random initialization. 

\begin{theorem}
  Let $n \in \N$ and $k \in [2..n]$. Consider a run of the \oea with mutation rate $p = \frac 1n$ on the jump function $\jump_{n,k}$. Denote by $N := N_{n,k} = \{x \in \{0,1\}^n \mid \|x\|_1 \in [0..n-k]\}$ the set of non-optimal solutions that lie not in the gap region of the jump function and by $p_k = (1-\frac 1n)^{n-k} n^{-k}$ the probability to generates the optimum from a solution on the local optimum.
	\begin{enumerate}
	\item Assume that the \oea starts with an arbitrary solution different from the global optimum. Then with probability $1 - O(\frac 1n)$, the algorithm reaches a search point in $N$. Consequently, the expected run time is at least $(1 - O(\frac 1n)) p_k^{-1}$.
	\item Assume that the \oea starts with a random initial solution. Then with probability $1 - O(2^{-n})$, the algorithm reaches a search point in $N$. Consequently, the expected run time is at least $(1 - O(2^{-n})) p_k^{-1}$.
	\end{enumerate}
\end{theorem}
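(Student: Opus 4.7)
The plan is to bound, in each case, the probability $q_{\text{skip}}$ that the algorithm reaches the global optimum without ever having a parent in $N$. Once this probability is shown to be $O(1/n)$ in part~1 (respectively $O(2^{-n})$ in part~2), the lower bound on the expected run time follows easily: given that $N$ is visited at some iteration, the algorithm must still generate the optimum from a parent in $N$, and this takes at least $1/p_k$ iterations in expectation, since $p_k = \max_{y \in N} \Pr[y \to 1^n]$ (the maximum being attained on the local optimum $L$) and the upper-bound clause of Lemma~\ref{lem:geometricDistribution} applies.

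My first step is to reduce to the case $Y_0 \in G_{n,k}$. If $Y_0 \in N$ then $N$ is trivially visited, and the alternative $Y_0 = 1^n$ (relevant only in part~2) contributes exactly $2^{-n}$. In the remaining case, let $S$ be the number of iterations $t$ at which $Y_t \in G_{n,k}$ and the offspring equals $1^n$. Because the algorithm stops upon reaching the optimum, $q_{\text{skip}} \leq \Pr[S \geq 1] \leq E[S]$. Since for standard bit mutation the probability of producing $1^n$ from any $x$ with $\|x\|_1 = j$ depends only on $j$, I decompose by fitness level in the gap. Writing $q_{j,n} = (1/n)^{n-j}(1-1/n)^j$ and letting $T_j$ denote the total number of iterations with $\|Y_t\|_1 = j$,
\begin{equation*}
E[S] = \sum_{j=n-k+1}^{n-1} q_{j,n}\, E[T_j].
\end{equation*}

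The second step is to estimate each $E[T_j]$. In the gap the fitness equals $n - \|x\|_1$, so a fitness-monotone $(1{+}1)$~EA never returns to the level $\{\|x\|_1 = j\}$ after leaving it. Lower-bounding the level-leaving probability by the event ``flip exactly one $1$-bit'', I obtain $p_j^{\text{leave}} \geq (j/n)(1-1/n)^{n-1}$, so by Lemma~\ref{lem:geometricDistribution} $E[T_j] \leq \Pr[\text{visit }j]/p_j^{\text{leave}}$. A short computation then yields the key bound
\begin{equation*}
\frac{q_{j,n}}{p_j^{\text{leave}}} \;\leq\; \frac{1}{j\,(n-1)^{n-j-1}}.
\end{equation*}
For part~1 I use $\Pr[\text{visit }j] \leq 1$; the sum over $j \in [n-k+1,n-1]$ is a geometric-type series dominated by its leading term at $j = n-1$, which equals $1/(n-1)$. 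Hence $q_{\text{skip}} \leq E[S] = O(1/n)$.

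For part~2 I condition on $\|Y_0\|_1 = j_0$; since the gap process is non-increasing in $\|\cdot\|_1$, only levels $j \leq j_0$ are visited, and the conditional skip probability is bounded by $O(1/(j_0(n-1)^{n-j_0-1}))$ by the same geometric series, now starting at $d = n-j_0$. Weighting by the initialization probabilities $\binom{n}{j_0}/2^n = \binom{n}{d}/2^n$ and using $\binom{n}{d} \leq n^d/d!$ shows
\begin{equation*}
\sum_{d=1}^{k-1} \frac{\binom{n}{d}}{(n-d)(n-1)^{d-1}} = O(1),
\end{equation*}
because for $d \leq n/2$ each summand is $O(1/d!)$, while for $d > n/2$ the factor $(n-1)^{d-1}$ makes it negligible. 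Combined with the $2^{-n}$ contribution from $Y_0 = 1^n$, this gives $q_{\text{skip}} = O(2^{-n})$. The step I expect to require the most care is the observation that $q_{j,n}/p_j^{\text{leave}}$ decays geometrically in $n-j-1$, so contributions from deeper gap levels are dominated by the ``closest-to-optimum'' level $j = n-1$; once this decay is isolated, the rest of the argument reduces to elementary estimates.
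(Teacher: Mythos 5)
Your proof is correct, and strategically it mirrors the paper's: both reduce the theorem to bounding the probability of skipping $N$, decompose that event over the fitness levels of the gap, bound each level's contribution by the ratio of the probability of jumping straight to the optimum to the probability of leaving the level via a one-bit flip, and conclude with the $(1-q)\,p_k^{-1}$ argument. The mechanisms differ in two places, though. For the per-level bound, the paper uses an exact conditional-probability identity: it defines $r_j$ as the probability that the move leaving gap level $A_j$ is itself the fatal jump, computes $r_j = \Pr[y = (1,\dots,1) \mid f(y) > j]$, and finishes with a union bound over levels. You instead run a first-moment argument: you count the expected number of iterations in which a gap parent generates the optimum, write it as $\sum_j q_{j,n} E[T_j]$, bound $E[T_j] \le \Pr[\mathrm{visit}\ j]/p_j^{\mathrm{leave}}$ via the waiting-time lemma, and apply Markov's inequality; after translating between your ``number of ones'' indexing and the paper's ``gap fitness'' indexing, this gives numerically the same per-level bound. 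For part 2, the paper notes that visiting $A_j$ forces the initial point into $A_1 \cup \dots \cup A_j$ and then needs Feller's binomial tail estimate to control the cumulative sums $\sum_{i \le j} \binom{n}{i}$ (splitting at $n/4$); you condition on the exact initial level, exploit that $\|\cdot\|_1$ is non-increasing while the parent stays in the gap (so only levels at or below the start are ever visited), and therefore only ever weight by a single coefficient $\binom{n}{d} \le n^d/d!$ (splitting at $n/2$). The net effect is that your part 2 is somewhat more self-contained, dispensing with the binomial tail inequality, while the paper's conditional identity in part 1 is more direct, avoiding the detour through expected visit counts and Markov's inequality.
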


\begin{proof}
  Denote by $f$ the jump function $\jump_{n,k}$. We consider the partition of the search space into the fitness levels of the gap as well as $N$ and the optimum. Hence let
	\begin{align*}
	A_j &:= \{x \in \{0,1\}^n \mid f(x) = j\} \mbox{ for } j \in [1..k-1],\\
	A_{k} &:= \{x \in \{0,1\}^n \mid f(x) \in [k..n]\} = N,\\
	A_{k+1} &:= \{(1, \dots, 1)\}. 
  \end{align*}
	Our first claim is that, regardless of the initialization as long as different from the optimum, the probability $q_k$ that the algorithm never has the parent individual in $A_k$ is $O(\frac 1n)$. Since we start the algorithm with a non-optimal search point, the only way the algorithm can avoid $A_k$ is by generating from a parent in $A_j$, $j \in [1..k-1]$, the global optimum. Denote by $r_j$ the probability that the algorithm, if the current search point is in $A_j$, in the remaining run generates the optimum from a search point in $A_j$. Then, as just discussed, by the union bound,
	\begin{equation}
	q_k \le \sum_{j=1}^{k-1} r_j. \label{eq:jumpr}
	\end{equation}
	The probability $r_j$ is exactly the probability that in the iteration in which from a search point in $A_j$ a better individual is generated, this is actually the global optimum. Hence $r_j = \Pr[y = (1, \dots, 1) \mid f(y) > j]$, where $y$ is a mutation offspring generated from a search point in $A_j$. We compute
  \begin{align}
		r_j &= \Pr[y = (1, \dots, 1) \mid f(y) > j] = \frac{\Pr[y = (1, \dots, 1)]}{\Pr[f(y) > j]} \nonumber\\
		&\le \frac{n^{-j}}{(1-\frac 1n)^{n-1} \frac {n-j}n} \le \frac{e}{n^{j-1} (n-j)},\nonumber
	\end{align}	
	where we estimated the probability to generate a search point with fitness better than $j$ by the probability of the event that a single one is flipped into a zero. Consequently, $q_k \le \sum_{j=1}^{k-1} r_j \le \sum_{j=1}^{n-1} \frac{e}{n^j (n-j)} = O(\frac 1n)$.
	
	Once a search point in $A_k$ is reached, the remaining run time dominates a geometric distribution with success probability $p_k = (1-\frac 1n)^{n-k} n^k$, simply because each of the following iterations (before the optimum is found) has at most this probability of generating the optimum; hence the expected remaining run time is at least $\frac 1 {p_k}$. This shows that the expected run time of the \oea started with any non-optimal search point is at least $(1-q_k) \frac 1 {p_k}$. 
	
	For the case of a random initialization, we proceed in a similar manner, but also use the trivial observation that to skip the fitness range $N$ by jumping from $A_j$, $j \in [1..k-1]$, right into the optimum, it is necessary that the algorithm visits $A_j$. To visit $A_j$, it is necessary that the initial search point lies in $A_1 \cup \dots \cup A_j$, which happens with probability $2^{-n} \sum_{i=1}^{k-1} \binom{n}{i}$ only. This, together with the observation that the only other way to avoid $A_k$ is that the initial individual is already the optimum, gives
	\begin{align*}
	q_k &\le \sum_{j=1}^{k-1} \left(2^{-n} \sum_{i=1}^{j} \binom{n}{i}\right) r_j + 2^{-n}.	
	\end{align*}
	Using a tail estimate for binomial distributions (equation (VI.3.4) in~\cite{Feller68}, also to be found as (1.10.62) in~\cite{Doerr20bookchapter}), we bound $\sum_{i=1}^{j} \binom{n}{i} \le 1.5 \binom{n}{j}$ for all $j \le \frac 14 n$. We also note from~\eqref{eq:jumpr} that $r_j \le 4 n^{-j}$ in this case. For $j \ge \frac 14 n$, we trivially have $\sum_{i=1}^{j} 2^{-n} \binom{n}{i} r_j \le r_j \le e r^{-(j-1)}$. Consequently, 
	\begin{align*}
	q_k &\le \sum_{j=1}^{n-1} \left(2^{-n} \sum_{i=1}^{j} \binom{n}{i}\right) r_j + 2^{-n}\\
	&\le 2^{-n} 1.5 \sum_{j = 1}^{\lfloor n/4 \rfloor} \binom{n}{j} r_j + \sum_{j = \lceil n/4 \rceil}^{n-1} e n^{-(j-1)}  + 2^{-n}\\
	&\le 2^{-n} 1.5 \sum_{j = 0}^{\lfloor n/4 \rfloor} \frac{n^j}{j!} 4 n^{-j} + O(n^{-(n/4)+1}) \le 2^{-n} 6 e  + O(n^{-(n/4)+1}).
	\end{align*}
	Hence, as above, the expected run time is at least $(1-q_k) \frac 1 {p_k} = (1 - O(2^{-n})) \frac 1 {p_k}$.	
\end{proof}

We note that the $O(\frac 1n)$ term in the bound for arbitrary initialization cannot be avoided in general, simply because when starting with a search point that is a neighbor of the optimum, the first iteration with probability at least $\frac 1{en}$ generates the optimum. The $O(2^{-n})$ term in the bound for random initialization is apparently necessary because with probability $2^{-n}$.

We also note that we did not optimize the implicit constants in the $O(\frac 1n)$ and $O(2^{-n})$ term. With more care, these could be replaced by $(1 + o(1)) \frac{1}{e-1} \frac 1n$ and $(1+o(1)) \frac{e}{e-1} 2^{-n}$, respectively.

\section{A Bound for Long $k$-Paths}
\label{sec:longKPaths}

Long $k$-paths, introduced in \cite{Rudolph96}, have been studied in various places; we point the reader to \cite{Sudholt09} for a discussion, which also contains the formalization that we use. A lower bound for long $k$-paths using FLM with viscosities was given in~\cite{Sudholt13}.

We use \cite[Lemma~3]{Sudholt09} (phrased as a definition below) and need to know no further details about what a long $k$-path is. In fact, our proof uses all the ideas of the proof of \cite{Sudholt13}, but cast in terms of our FLM with visit probabilities, which, we believe, makes the proof simpler and the core ideas given by \cite{Sudholt13} more prominent. Note that \cite{Sudholt13} first needs to extend the FLM with viscosities by introducing an additional parameter before it is applicable in this case.

\begin{definition}
Let $k, n$ be given such that $k$ divides $n$. A \emph{long $k$-path} is function $f: \{0,1\}^n \rightarrow \R$ such that
\begin{itemize}
	\item The $0$-bit string has a fitness of $0$;  there are $m = k2^{n/k} - k$ bit strings of positive fitness, and all these values are distinct; all other bit strings have negative fitness. We call the bit strings with non-negative fitness as being \emph{on the path} and consider them ordered by fitness (this way we can talk about the ``next'' element on the path and similar).
	\item For each bit string with non-negative fitness and each $i < k$, the bit string with $i$-next higher fitness is exactly a Hamming distance of $i$ away.
	\item For each bit string with non-negative fitness and each $i\geq k$, the bit string with $i$-next higher fitness is at least a Hamming distance of $k$ away.
\end{itemize}
\end{definition}
For an explicit construction of a long $k$-path, see~\cite{DrosteJW02,Sudholt09}. The long $k$-paths are designed such that optimization proceeds by following the (long) path and true shortcuts are unlikely, since they require jumping at least $k$.

The following lower bound for optimizing long $k$-paths with the \oea is given in \cite{Sudholt13}. Note that $n$ is the length of the bit strings, $m$ is the length of the path and $p$ is the mutation rate.
\begin{equation}
m \; \frac{1-2p}{p(1-p)^{n}} \; \frac{1-2p}{1-p} \; \left(1- \left( \frac{p}{1-p}\right)^{k}\right)^m. \label{eq:sud}
\end{equation}
We want to show here that we can derive the essentially same bound with the same ideas but less technical details.

Note that the lower bound given in \cite{Sudholt13} is only meaningful for $k \geq \sqrt{n/\log(1/p)}$, as the last term of the bound would otherwise be close to~$0$:
\begin{align*}
\left(1- \left(\frac{p}{1-p}\right)^k\right)^m &\leq \left(1- p^k\right)^m \leq \exp(-mp^k) \\
&\leq \exp(-2^{n/k}p^k) = \exp(-2^{n/k-k\log(1/p)}).
\end{align*}
We have that $n/k-k\log(1/p)$ is positive if and only if $n/\log(1/p) \geq k^2$.

In fact, if $k = \omega\left(\sqrt{n/\log(1/p)}\right)$, we have
\begin{align*}
\left(1- \left(\frac{p}{1-p}\right)^k\right)^m 
 & \geq \left(1- \left(2p\right)^k\right)^m\\
 & \geq 1 - m(2p)^k\\
 & \geq 1-2^{3n/k-k\log(1/p)}\\
 & = 1-2^{\sqrt{n}o(\log(1/p))-\sqrt{n}\omega(\log(1/p))}\\
 & = 1-2^{-\sqrt{n}\omega(\log(1/p))}\\
 & \geq 1 - 2^{-\sqrt{n}}.
\end{align*}
This also entails 
$$
p \leq \exp(-n/k^2).
$$

With our fitness level method, we obtain the following lower bound. It differs from Sudholt's bound~\eqref{eq:sud} by an additional term $m$, which reduces the lower bound. Analyzing why this term does not appear in Sudholt's analysis, we note that the $\gamma_{i,j}$ chosen in \cite{Sudholt13} are underestimating the true probability to jump to elements of the path that are more than $k$ steps (on the path) away. When this is corrected, as confirmed to us by the author, Sudholt's proof would also only show our bound below. Consequently, there is currently no proof for~\eqref{eq:sud}. 

\begin{theorem}
Consider the \oea on a long $k$-path of length $m$ with mutation rate $p \leq 1/2$ starting at the all-$0$ bit string (the start of the path).\footnote{This simplifying assumption about the start point was also made in \cite{Sudholt13}.}

Let $T$ be the (random) time for the \oea to find the optimum. Then
$$
E[T] \geq m \; \frac{1-2p}{p(1-p)^{n}} \; \frac{1-2p}{1-p} \; \left(1- m \left( \frac{p}{1-p}\right)^{k-1}\right)^m.
$$	
\end{theorem}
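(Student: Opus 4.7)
The plan is to apply Theorem~\ref{thm:fbp_visit_lower} to the level process $X_t$ that records the path index of the current parent search point. Since every off-path point has negative fitness, it is never accepted; starting from the all-zero string (path position~$0$), the process is non-decreasing on $\{0, 1, \ldots, m\}$, with singleton levels $A_i$. The sum in Theorem~\ref{thm:fbp_visit_lower} then has $m$ terms, matching the leading factor of $m$ in the target bound.

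First I will upper bound the level-leaving probability $p_i$ by splitting forward transitions by their size on the path. Short jumps of size $j \in [1..k-1]$ require Hamming distance exactly $j$ by the defining property of a long $k$-path, each contributing $p^j(1-p)^{n-j}$; the geometric series bounds their total by $\frac{p(1-p)^n}{1-2p}$. Long jumps of size at least $k$ require Hamming distance at least $k$, so, using $p \leq 1/2$, each such destination contributes at most $p^k(1-p)^{n-k}$, and there are at most $m$ of them. Altogether,
\[p_i \;\leq\; P \;:=\; p(1-p)^{n-1}\Bigl[\tfrac{1-p}{1-2p} + m\bigl(\tfrac{p}{1-p}\bigr)^{k-1}\Bigr].\]

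Next I will lower bound the visit probability $v_i$ by the probability that the first $i$ forward moves each advance by exactly one step on the path. By the strong Markov property of the underlying (1+1)~EA this equals $\prod_{j=1}^{i} p_{j-1,j}/p_{j-1}$, and each factor is at least $R := p(1-p)^{n-1}/P$, which via $1/(1+x) \geq 1 - x$ is at least $\frac{1-2p}{1-p}\bigl(1 - m(\tfrac{p}{1-p})^{k-1}\bigr)$. Theorem~\ref{thm:fbp_visit_lower} then gives $E[T] \geq \sum_{i=0}^{m-1} R^i / P$. Invoking the elementary estimate $\sum_{i=0}^{m-1} R^i \geq m R^{m-1}$ (each summand dominates $R^{m-1}$ when $R \leq 1$) together with the identity $RP = p(1-p)^{n-1}$ rearranges this to $E[T] \geq m R^m / (p(1-p)^{n-1})$, into which I substitute the lower bound on $R$.

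The main obstacle is aligning the derived expression with the precise algebraic form of the statement: the product bound on $v_i$ naturally produces factors of $\bigl(\tfrac{1-2p}{1-p}\bigr)^i$ rather than the single copy of $\tfrac{1-2p}{1-p}$ that appears outside the $m$-th power in the stated bound. The absorption identity $RP = p(1-p)^{n-1}$ rolls one power of $R$ back into the leading $1/(p(1-p)^{n-1})$ factor, and the remaining $R^m$ supplies the $\bigl(1 - m(\tfrac{p}{1-p})^{k-1}\bigr)^m$ term; a careful final step with these elementary inequalities is then needed to match the stated form to the desired precision (and, if needed, to refine the $v_i$ bound via Lemma~\ref{lem:visitprob} applied only to the most recent previous state).
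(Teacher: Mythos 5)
Your individual computations are sound (the bound $P$ on the leaving probabilities, the product formula $v_i \geq \prod_{j=1}^{i} p_{j-1,j}/p_{j-1} \geq R^i$), but the overall strategy cannot prove the stated bound, and the difficulty you flag at the end is not a matter of careful final algebra --- it is fatal. Bounding the visit probability of level $i$ by the probability that \emph{every} forward move up to level $i$ advances by exactly one step forces the factor $R^i \leq \left(\frac{1-2p}{1-p}\right)^i$, and after summing you obtain at best $E[T] \geq \frac{m}{p(1-p)^{n-1}} R^m$, which carries $\left(\frac{1-2p}{1-p}\right)^m$ where the theorem has $\left(\frac{1-2p}{1-p}\right)^2$. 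Since $\frac{1-2p}{1-p} < 1$, your bound falls short of the claimed one by the factor $\left(\frac{1-2p}{1-p}\right)^{m-2}$, which is exponentially small in $m$ (for $p = \frac 1n$ it is roughly $e^{-m/n}$, and $m = k2^{n/k}-k$ is exponential in $n/k$); no elementary inequality recovers this loss. Your fallback --- applying Lemma~\ref{lem:visitprob} state-by-state to the original process --- also fails: for the worst-case previous state $x$ at path distance $k-1$ below level $i$, the only short jump from $x$ into $A_{\geq i}$ is to $A_i$ itself, while all of the up to $m$ levels above $i$ are potential long-jump destinations, so the guaranteed conditional probability is only $\frac{1}{1 + m\frac{p}{1-p}}$, far below the required $\frac{1-2p}{1-p}$ whenever $mp$ is large, which is the typical regime since $m$ is exponential in $n/k$.

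The missing idea, which is how the paper proceeds, is to decouple the long jumps from the level-by-level analysis rather than folding them into $p_i$ and $v_i$. The paper first analyzes a \emph{modified} process in which any offspring at Hamming distance at least $k$ from its parent is discarded. In that process the leaving probability is at most $p(1-p)^n/(1-2p)$ and, crucially, Lemma~\ref{lem:visitprob} gives a visit probability of at least $\frac{1-2p}{1-p}$ \emph{uniformly in} $i$: the conditioning denominator is now only the geometric series over short jumps, with no $m$-term, so there is no compounding over levels, and Theorem~\ref{thm:fbp_visit_lower} yields $m\,\frac{1-2p}{p(1-p)^{n}}\,\frac{1-2p}{1-p}$ for the modified process. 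Then, separately and only once, one bounds the probability that the original run never accepts a jump of length at least $k$: conditional on leaving a level, the probability of leaving it by a $\geq k$-jump is at most $m\left(\frac{p}{1-p}\right)^{k-1}$, so this is avoided on all $m$ levels with probability at least $\left(1- m \left( \frac{p}{1-p}\right)^{k-1}\right)^m$; pessimistically assigning time $0$ to runs containing a long jump, the expected run time of the original process is at least the product of the two bounds. The point is that the correction for long jumps multiplies the final answer once, instead of entering once per level (your $R^i$) or once per worst-case transition (your fallback), and that is precisely what your framework cannot reproduce.
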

\begin{proof}
We are setting up to apply Theorem~\ref{thm:fbp_visit_lower}. We partition the search space in the canonical way such that, for all $i \leq m$ with $i > 0$, $A_i$ contains the only $i$-th point of the path and nothing else, and $A_0$ contains all points not on the path. In order to simplify the analysis, we will first change the behavior of the algorithm such that it discards any offspring which differs from its parent by at least $k$ bits. This will allow us to apply Theorem~\ref{thm:fbp_visit_lower} quickly and cleanly, afterwards we will show that the progress of this modified algorithm is very close to the progress of the original algorithm.

In this modified process, we first consider the probability $p_i$ to leave a given level $i < m$. For this, the algorithm has to jump up exactly $j < k$ fitness levels, which is achieved by flipping a specific set of $j$ bits; the probability for this is
\begin{align*}
p_i = \sum_{j=1}^{k-1} p^j(1-p)^{n-j}
 & \leq (1-p)^{n} \sum_{j=1}^{\infty} \left(\frac{p}{1-p}\right)^j\\
 & = (1-p)^{n} \frac{p/(1-p)}{1-p/(1-p)}\\
 & = p(1-p)^{n} \frac{1}{1-2p}.
\end{align*}

Next we consider the probability $v_i$ to visit a level $i$. We want to apply Lemma~\ref{lem:visitprob}, so let some $x \in A_{< i}$ be given, on level $\ell(x)$. Let $d = i-\ell(x)$.
Note that $d$ is the Hamming distance between $x$ and the unique point in $A_i$. Thus, in case of $d \geq k$, we have $\Pr[x \rightarrow A_i] = 0$, so suppose $d < k$. Then we have
\begin{align*}
\Pr\bigg[x \rightarrow A_i \,\bigg|\, x \rightarrow \bigcup_{j=i}^m A_j\bigg]
 & =  \frac{\Pr[x \rightarrow A_i]}{\Pr[x \rightarrow \bigcup_{j=i}^m A_j]}\\
 & =  \frac{p^{d}(1-p)^{n-d}}{\sum_{j=d}^k p^{j}(1-p)^{n-j}}\\
 & =  \frac{1}{\sum_{j=d}^k p^{j-d}(1-p)^{d-j}}\\
 & =  \frac{1}{\sum_{j=0}^{k-d} p^{j}(1-p)^{-j}}\\
 & \geq  \frac{1}{\sum_{j=0}^{\infty} p^{j}(1-p)^{-j}}\\
 & =  1 - \frac{p}{1-p}\\
 & = \frac{1-2p}{1-p}.
\end{align*}
By Lemma~\ref{lem:visitprob}, we can use this last term as $v_i$ in Theorem~\ref{thm:fbp_visit_lower} (it also fulfills the second condition of Lemma~\ref{lem:visitprob}, since the process starts deterministically in the $0$ string). Note that neither $p_i$ nor $v_i$ depends on $i$. Using Theorem~\ref{thm:fbp_visit_lower} and recalling that we have $m$ levels, we get a lower bound of 
$$
m \frac{1-2p}{p(1-p)^{n}} \frac{1-2p}{1-p}.
$$
Note that this is exactly the term derived in~\cite{Sudholt13} except for a term correcting for the possibility of jumps of more than $k$ bits, which we also still need to correct for.

We now show that this probability of making a successful jump of distance at least $k$ is small. To that end we will show that it is very unlikely to leave a fitness level with a large jump rather than just move to the next level.

Suppose the algorithm is currently at $x \in A_i$. Leaving $x$ with a jump of at least $k$ to a specific element on the path is less likely the longer the jump is (since $p \leq 1/2$). Thus, we can upper bound the probability of jumping to an element of the path which is more than $k$ away as $p^k(1-p)^{n-k}$. Thus, conditional on leaving the fitness level, the probability of leaving it with a $\geq k$-jump is
\begin{align*}
\Pr[x \rightarrow A_{\geq i+k} \mid x \rightarrow A_{>i}]
 & = \frac{\Pr[x \rightarrow A_{\geq i+k}]}{\Pr[x \rightarrow A_{>i}]}\\
 & \leq  \frac{mp^k(1-p)^{n-k}}{p(1-p)^{n-1}}\\
 & =  m \left( \frac{p}{1-p}\right)^{k-1}.
\end{align*}
Thus, the probability of never making an accepted jump of at least $k$ is bounded from below by the probability to, independently once for each of the $m$ fitness levels, leave the fitness level with a $1$-step rather than a jump of at least $k$:
$$
\left(1- m \left( \frac{p}{1-p}\right)^{k-1}\right)^m.
$$
By pessimistically assuming that the process takes a time of $0$ in case it ever makes an accepted jump of at least $k$, we can lower-bound the expected time of the original process to reach the optimum as the product of the expected time of the modified process times the probability to never make progress of $k$ or more.
\ignore{==== Old from here
We want to pessimistically suppose that the original algorithm takes exactly $0$ steps if it (a) takes longer than $2t_0$ iterations; or (b) if it ever makes a jump of size larger than $k$ within $2t_0$ steps. Let $T'$ be the random variable describing the run time of the modified process analyzed above and let $A$ be the event that the process makes a jump of at least $k$ within $2t_0$ iterations. We can thus say that the expected run time of the original process is at least
$$
E[T' \mathds{1}(T' \leq 2t_0)]\Pr[A].
$$

We first bound $\Pr[A]$. The probability to jump more than $k$ to an element of the path under this assumption at most $m p^k(1-p)^{n-k}$, since there are at most $m$ elements on the path in a distance of at least $k$, and in the best case they are exactly $k$ away (using $p \leq 1/2$). The term $p^k(1-p)^{n-k}$ is maximized for $p = k/n$. Note that
$
\frac{1-2p}{1-p} \leq 1.
$
Thus, the probability to flip at least $k$ elements and land on the path in any of $2t_0$ attempts is (using a union bound) 
\begin{align*}
\Pr[A] &\leq 2t_0 m p^k(1-p)^{n-k}\\
 & =  2m^2 (1-2p)^2 p^{k-1}(1-p)^{-k-1}\\
 & \leq 2 m^2 (p/(1-p))^{k-1}\\
 & \leq k2^{2n/k+1} (p/2)^{k-1}\\
 & = 2^{\log(k) + 2n/k+1 -(k-1)\log(1/p)-k+1}\\
 & = 2^{\log(k) + \sqrt{n\log(1/p)}o(1) - \sqrt{n\log(1/p)}\omega(1)-k+1}\\
 & = 2^{- \sqrt{n\log(1/p)}\omega(1)}\\
 & \leq 2^{- \sqrt{n}\omega(1)}.
\end{align*}

Let us now turn to the term $E[T' \mathds{1}(T' \geq 2t_0)]$. We approach this term by using
$$
E[T'] = E[T' \mathds{1}(T' \leq 2t_0)] + E[T' \mathds{1}(T' > 2t_0)]
$$
and rearranging. We already computed $E[T']$, so it remains to bound $E[T' \mathds{1}(T' > 2t_0)]$. Essentially we need a concentration result for the run time of the modified process. To this end we turn to a particularly strong result about upper bounds with fitness levels providing just this. According to \cite[Theorem~1.8.6]{Doerr20bookchapter}, the run time of the \oea on a long $k$-path is dominated by a sum of $m$ independent geometric distributions with parameter $p(1-p)^{n-1}$; according to \cite[Theorem~1.10.32]{Doerr20bookchapter}, the probability of this sum of geometric distributions to exceed its expectation by a factor of $\delta$ is exponentially small: $2^{-\delta (m-1)/4}$.

Thus, we can compute
\begin{align*}
E[T' \mathds{1}(T' > 2t_0)] & \leq \sum_{i=2t_0+1}^\infty \Pr[T' = i] \\
 & \leq \sum_{i=2t_0+1}^\infty 2^{-(i/t_0 - 1)(m-1)/4}\\
 & \leq \int_{2t_0}^\infty 2^{-(i/t_0 - 1)(m-1)/4} \mathrm{d}i\\
 & = [-4t_0/m 2^{-(i/t_0 - 1)(m-1)/4} ]_{2t_0}^\infty\\
 & = 4t_0/m 2^{-(2 - 1)(m-1)/4}\\
 & = 4 \frac{(1-2p)^2}{p(1-p)^{n+1}} 2^{-(m-1)/4}\\
 & \leq \frac{1}{p(1-p)^{n-1}} 2^{-m + 9/4}\\
 & \leq 2^{-n+1}\frac{1}{p} 2^{-m + 9/4}\\
 & \leq 2^{-n+1+\log(1/p) - m + 9/4}
 & \leq 2^{-O(m)}.
\end{align*}
\merk{The last line is not correct in all cases, just in interesting ones.}

Thus, we see
$$
E[T' \mathds{1}(T' \leq 2t_0)]\Pr[A] = (E[T'] - E[T' \mathds{1}(T' > 2t_0)])\Pr[A] = (t_0 - 2^{-O(m)})2^{- \omega(\sqrt{n})} \geq t_0 2^{- \omega(\sqrt{n})}.
$$}%
\end{proof}

\section{Conclusion}

In this work, we proposed a simple and natural way to prove lower bounds via fitness level arguments. The key to our approach is that the true run time can be expressed as the sum of the waiting times to leave a fitness level, weighted with the probability that this level is visited at all. When applying this idea, usually the most difficult part is estimating the probabilities to visit the levels, but as our examples \leadingones, \onemax, jump functions, and long paths show, this is not overly difficult and clearly easier than setting correctly the viscosity parameters of the previous fitness level method for lower bounds. For this reason, we are optimistic that our method will be an effective way to prove other lower bounds in the future, most easily, of course, for problems where upper bounds were proven via fitness level arguments as well.

Our method makes most sense for elitist evolutionary algorithms even though by regarding the best-so-far individual any evolutionary algorithm gives rise to a non-decreasing level process (at the price that the estimates for the level leaving probabilities become weaker). We are optimistic that our method can be extended to non-elitist algorithms, though. We note that the level visit probability $v_i$ for an elitist algorithm is equal to the expected number of separate visits to this level (simply because each level is visited exactly once or never). When defining the $v_i$ as the expected number of times the $i$-th level is visited, our upper and lower bounds of Theorems~\ref{thm:fbp_visit_lower} and~\ref{thm:fbp_visit_upper} remain valid (the proof would use Wald's equation). We did not detail this in our work since our main focus were the elitist examples regarded in~\cite{Sudholt13}, but we are optimistic that this direction could be interesting to prove lower bounds also for non-elitist algorithms.

\newcommand{\etalchar}[1]{$^{#1}$}


}

\end{document}